\documentclass{article}

\PassOptionsToPackage{numbers,round}{natbib}
\usepackage[preprint]{neurips_2026}

\usepackage[utf8]{inputenc}
\usepackage[T1]{fontenc}
\IfFileExists{courier.sty}{\usepackage{courier}}{}
\usepackage{microtype}
\usepackage{amsmath,amssymb,amsthm,bm}
\usepackage{booktabs}
\usepackage{graphicx}
\ifdefined\pdfsuppressptexinfo\pdfsuppressptexinfo=-1\fi
\usepackage{multirow}
\usepackage{algorithm}
\usepackage{algpseudocode}
\usepackage{xcolor}
\definecolor{fail}{HTML}{B3261E}
\definecolor{rust}{HTML}{A8501E}
\usepackage{hyperref}
\usepackage{enumitem}

\hypersetup{colorlinks=true,linkcolor=blue!45!black,citecolor=blue!45!black,urlcolor=blue!45!black}
\hypersetup{%
  pdftitle={Verify Claims, Not Scores: Evidence-Based Verification of Modular Agents},
  pdfauthor={Ali Atiah Alzahrani},
  pdfsubject={},
  pdfkeywords={agent verification; reliable agent development; verifier robustness; environment-grounded evaluation; modular decision systems; oracle interventions; runtime verification; heterogeneous verification signals},
  pdfcreator={},
  pdfproducer={}}

\theoremstyle{plain}

\newtheorem{condcert}{Conditional certificate statement}
\newtheorem{proposition}{Proposition}
\newtheorem{corollary}{Corollary}
\newtheorem{lemma}{Lemma}
\theoremstyle{definition}
\newtheorem{assumption}{Assumption}
\newtheorem{definition}{Definition}
\theoremstyle{remark}
\newtheorem{remark}{Remark}
\newtheorem{theorem-restate}{Theorem}
\newtheorem*{prop-restate}{Proposition~\ref{prop:union} (restated)}

\newcommand{\R}{\mathbb{R}}
\newcommand{\E}{\mathbb{E}}
\newcommand{\Prob}{\mathbb{P}}
\newcommand{\CVaR}{\mathrm{CVaR}}
\newcommand{\VaR}{\mathrm{VaR}}
\newcommand{\hmu}{\hat{\mu}}
\newcommand{\hSig}{\hat{\Sigma}}
\newcommand{\Ltail}{\mathcal{L}_{\mathrm{tail}}}

\newcommand{\wt}{w_t}
\newcommand{\wsol}{w^{\mathrm{sol}}}
\usepackage{xspace}
\newcommand{\arc}{\textsc{ARC-Agent}\xspace}

\title{Verify Claims, Not Scores:\\Evidence-Based Verification of Modular Agents}

\author{Ali Atiah Alzahrani$^{\dagger}$}

\begin{document}
\maketitle
\begingroup
\renewcommand{\thefootnote}{$\dagger$}%
\footnotetext{Corresponding author: \texttt{aliatiah100@gmail.com}. The views
expressed are those of the author and do not reflect the views of any other
individual or entity. This material is for research purposes only and does not
constitute investment advice.}%
\endgroup

\begin{abstract}
When developers change one component of an agent, such as its controller, a learned model or its
verifier, they usually judge the change by an aggregate task score. That score cannot tell whether
improvement was attainable, which component lost value, or what the agent's own checks certify. We
introduce a \emph{claim-specific verification audit} for modular agents that plan, act, check and
refine. Instead of scoring the agent, the audit scores the \emph{evidence}: each conclusion is
recorded with the evidence behind it, one of four verdicts (supported, unsupported, unresolved or not
evaluated) and the boundary within which it holds. Three tools supply that evidence. Oracle policies
measure attainable improvement under an explicitly stated action set, so that a low value can be
traced to the evaluation rather than to the environment. Replacing one component at a time with a
perfect counterpart locates lost value, with null results read as unresolved whenever a downstream
component could mask them. A separate test asks whether the verifier's score identifies the quantity
it is read as bounding. Applied to a constrained portfolio-allocation agent in a synthetic market with
known hidden regimes, the audit shows that the value of perfect regime information depends on the
action set used to measure it, that the scenario generator discards most of the regime signal while
better local fidelity does not improve decisions, and that the runtime verifier can be bypassed with
no visible change in outcomes. The contribution is the protocol and the evidential distinctions it
enforces; the empirical findings are specific to the agent and environment studied.
\end{abstract}

\section{Introduction}\label{sec:intro}
Agent systems are usually improved by changing one component, such as the controller, a tool, a
memory, a learned model or a verifier, and then comparing an aggregate task score before and after
\citep{liu2024agentbench,rabanser2026reliability}. How much such a score can establish is widely
questioned. Benchmark scores often stand in for broader constructs than they measure
\citep{jacobs2021measurement,raji2021everything}, agent evaluation harnesses contain errors that
change conclusions \citep{elhattami2025webarenaverified,kara2025warex}, and small differences are
often indistinguishable from seed variation \citep{agarwal2021precipice}. We ask a narrower question:
given a change to one component, what evidence would \emph{verify} that it helped, \emph{locate} where
value was lost if it did not, and \emph{delimit} what each verification signal can support?

An aggregate comparison is uninformative in at least three situations, each documented in a separate
literature:
\begin{itemize}[leftmargin=1.2em,itemsep=2pt,topsep=3pt]
\item \textbf{The evaluation cannot express the improvement.} The value of information depends on the
actions available to exploit it \citep{howard1966}, so an evaluation with a restricted action set
can report little headroom where much exists.
\item \textbf{A downstream component masks an upstream one.} A model can improve its own fidelity
metric without improving the decisions it feeds \citep{elmachtoub2022spo,donti2017task}, and a
better upstream module changes nothing if a downstream module ignores its output.
\item \textbf{The verifier does not measure what it certifies.} A score may not identify the target
quantity \citep{gneiting2011}, optimising against a proxy check can satisfy it without the intended
property \citep{skalse2022reward,krakovna2020specification}, and agents that re-sample until a check
passes are selecting on the check \citep{yao2023react,wang2024codeact}.
\end{itemize}
In each case the score is consistent with several incompatible explanations. Telling them apart
requires the true hidden state, the best decision attainable given that state, and the ability to
replace a component with a perfect version of itself. We therefore work in a controlled setting that
provides all three: a synthetic market with recorded hidden regimes, in which an agent allocates a
portfolio under explicit constraints. The agent contains no language model, and our claims concern
modular decision agents of this kind, although the questions the audit asks are not specific to
finance.

\paragraph{Approach and what is new.} We propose a \emph{claim-specific verification audit}. Its
individual tools (oracle policies, component replacement, factorial attribution, conformal
calibration) are established. What is new is the object being verified: not the agent's score, but
each \emph{verification claim}, recorded as claim, evidence, verdict and boundary. Making the claim
the unit of verification has three consequences that, to our knowledge, have not been made explicit
together for agent development:
\begin{enumerate}[leftmargin=1.4em,itemsep=2pt,topsep=3pt]
\item \textbf{Headroom is a property of the harness.} Measuring oracle value at fixed information
across a lattice of action sets turns a null result into a testable statement about the evaluation,
which we call \emph{verifier expressiveness}, rather than a statement about the environment
(\S\ref{sec:lattice-main}).
\item \textbf{Null interventions are non-identifying by default.} Component replacements are read as
\emph{recovery}, not ablation: a positive result localises an observed bottleneck, whereas a null
result is recorded as \emph{unresolved} whenever a downstream module could mask it
(\S\ref{sec:framework-interv}).
\item \textbf{The verifier is itself a verification target.} The audit separates sampling error, model
misspecification and \emph{score identification}, meaning whether a check measures what it is read as
bounding, and keeps decision quality, calibration, admissibility and provenance as distinct claims
(\S\ref{sec:theory}). This exposes a failure that neither outcome metrics nor feasibility and
provenance records detect (\S\ref{sec:adversarial}).
\end{enumerate}

\paragraph{Case study.} The system under audit, \arc, is a reference agent that we built for this
study. It is the object of the audit rather than a proposed allocation method, and we make no
performance claim for it (\S\ref{sec:bench}; App.~\ref{app:setup}). The audit finds that:
\begin{itemize}[leftmargin=1.2em,itemsep=1pt,topsep=3pt]
\item perfect regime information has a small, statistically unresolved value under a two-portfolio
evaluation and a clearly positive value under regime-specific weights (\S\ref{sec:oracle});
\item the scenario generator keeps less than a third of the regime signal it receives, and improving
its local fidelity does not detectably improve decisions (\S\ref{sec:bottleneck}, \S\ref{sec:mech});
\item a joint repair of two generator outputs helps, while their interaction remains unresolved
(\S\ref{sec:masked});
\item bypassing the runtime verifier voids the decision-quality claim with no visible effect on
outcomes (\S\ref{sec:adversarial}).
\end{itemize}

\section{Related work}\label{sec:related}
\textbf{Evaluating agents versus verifying a change.} Benchmarking performance, verifying that a
\emph{particular} change helped, and diagnosing \emph{why} value was lost are different activities;
an aggregate score mainly serves the first \citep{liu2024agentbench,rabanser2026reliability}.
Construct-validity accounts ask what an operationalisation measures
\citep{jacobs2021measurement,raji2021everything}; we ask the same of a \emph{diagnostic} protocol.
Related work repairs evaluation harnesses \citep{elhattami2025webarenaverified,kara2025warex},
measures gaps between visible and held-out checks \citep{zhao2026specbench}, studies reward hacking
\citep{skalse2022reward,krakovna2020specification}, and monitors agents at runtime
\citep{koohestani2025agentguard,zheng2025agentsight}. Because seed counts and pairing decide which
conclusions survive \citep{agarwal2021precipice}, every result here states its seeds, pairing and
status (Table~\ref{tab:seeds}).

\textbf{Uncertainty a verification signal can address.} Separating uncertainty that more data can
reduce from uncertainty it cannot is standard \citep{derkiureghian2009,huellermeier2021}, and
imprecise-probability and partial-identification traditions \citep{walley1991,shafer1976,manski2003}
formalise the set-valued claims our verdicts take. Distribution-free methods
\citep{vovk2005,gibbs2021,angelopoulos2023} control sampling error and the \emph{consequences} of
misspecification; we add the prior question of whether a score \emph{identifies} the quantity it is
read as bounding (\S\ref{sec:theory}).

\textbf{Established tools, a new object.} The value of information depends on the actions that can
use it \citep{howard1966}, and decision-focused learning shows that fidelity and decision quality
come apart \citep{elmachtoub2022spo,donti2017task,amos2017optnet}. Oracles, component replacement,
factorial designs and Shapley attribution \citep{shapley1953,sundararajan2020shapleytaylor} are all
established, and our information--action lattice is elementary as mathematics.
Table~\ref{tab:position} summarises what we add: a protocol that binds each tool to a named claim,
an identification assumption, a verdict and a boundary.

\begin{table}[tbp]\centering\scriptsize
\setlength{\tabcolsep}{3pt}
\caption{\textbf{What each practice is typically designed to establish.} Entries describe the usual
focus of a family of methods, not the limits of every work within it. \checkmark: addressed by
design; ``partly'': addressed for a specific case; --: not the focus.}
\label{tab:position}
\begin{tabular}{@{}p{0.21\linewidth}p{0.22\linewidth}cccc@{}}
\toprule
Practice & Question it answers & \shortstack{Null traced\\ to harness} & \shortstack{Masking-aware\\ localisation} & \shortstack{Quality / admiss.\ /\\ provenance kept apart} & \shortstack{Verifier score\\ identification} \\
\midrule
Aggregate benchmarking & How well does the agent do? & -- & -- & -- & -- \\
Harness auditing & Is the grader correct? & partly & -- & -- & -- \\
Component ablation & How much does removal hurt? & -- & -- & -- & -- \\
Runtime verification & Does a run meet a spec? & -- & -- & partly & -- \\
Decision-focused learning & Does fidelity serve decisions? & -- & partly & -- & -- \\
\midrule
\textbf{Claim-specific audit} & Which claims does the evidence support? & \checkmark & \checkmark & \checkmark & \checkmark \\
\bottomrule
\end{tabular}
\end{table}

\section{The system and environment under audit}\label{sec:bench}\label{sec:prereg}
Before presenting the audit, we describe the concrete system it is applied to. Every component and
signal referred to later is introduced here.

\begin{figure}[tbp]
\centering
\includegraphics[width=0.95\linewidth]{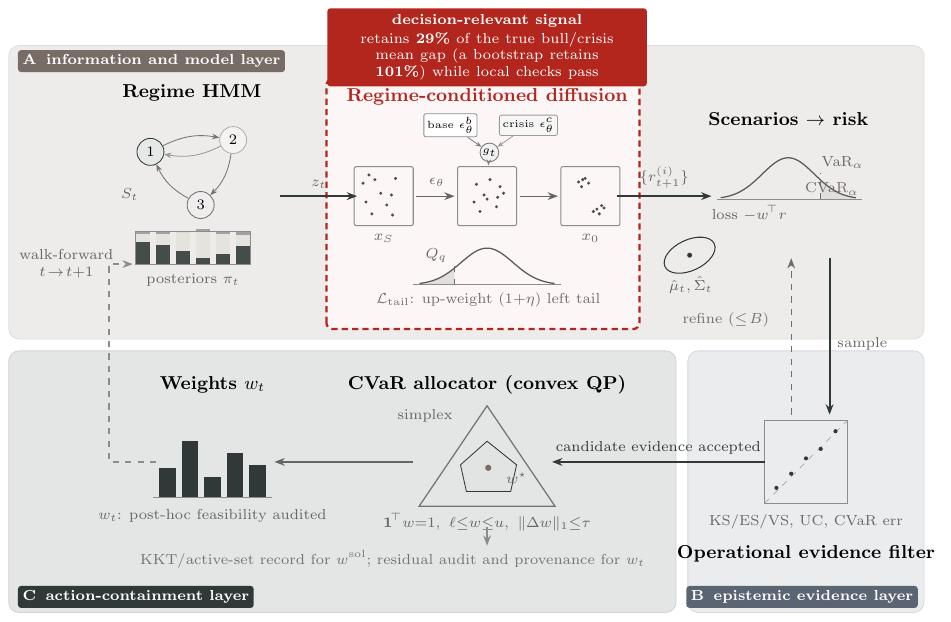}
\caption{\textbf{\arc, the agent under audit, and its three verification layers.} Arrows trace one
rebalancing decision at date $t$. \textbf{(A) Information and model layer}: a regime HMM turns past
returns into regime posteriors $\pi_t$ (walk-forward), and a regime-conditioned diffusion generator,
whose base and crisis experts are mixed by a gate $g_t$ and trained with a tail-weighted loss
$\Ltail$, draws return scenarios from which the risk quantities ($\VaR_\alpha$, $\CVaR_\alpha$) and
moments $(\hmu_t,\hSig_t)$ are computed. The red dashed box marks where the audit finds
decision-relevant signal lost: the generator retains $29\%$ of the true bull--crisis mean gap (a
bootstrap retains $101\%$) while local checks pass (\S\ref{sec:bottleneck}). \textbf{(B) Epistemic
evidence layer}: the operational evidence filter scores each scenario set with calibration checks
(KS, energy and variogram scores, unconditional coverage, CVaR error) and either accepts it or requests
refinement, at most $B$ times; it certifies no $\CVaR_\alpha$ bound (\S\ref{sec:theory}).
\textbf{(C) Action-containment layer}: the CVaR allocator solves a convex QP under the budget, box and
turnover constraints. Its KKT and active-set record is optimality evidence for the solver output
$\wsol$, whereas the committed weights $w_t$ receive only a post-hoc feasibility audit, which supports
admissibility for the reported runs.}
\label{fig:arch}
\end{figure}

\paragraph{Environment.} We use U3, a synthetic benchmark (App.~\ref{app:u3}). Ten assets, three of
them defensive, have daily returns whose distribution depends on a hidden market \emph{regime}: bull,
stagnation or crisis. The regime switches according to a Markov chain. Crises are rare (about $5\%$ of
days), last about $45$ trading days on average, and are fat-tailed, with risky assets falling
together. Because the simulator records the true regime, three things become possible that are
impossible on historical data: regime inference can be scored directly, policies that know the regime
can be computed exactly, and any component can be replaced by an exact counterpart. U3 is
verification infrastructure, not evidence about financial markets. Each run (seed) simulates a
$756$-day warm-up followed by $69$ monthly decisions, with all estimation strictly walk-forward.

\paragraph{Decision problem and metric.} At each rebalance $t$ the agent chooses weights $w_t$ that
are long-only and fully invested (the \emph{budget}), lie in a per-asset \emph{box} $[0,0.25]$, and
move at most $0.20$ in $\ell_1$ norm from $w_{t-1}$ (the \emph{turnover cap}), with $8$\,bps costs.
The metric is the Sharpe ratio, mean return divided by volatility. Because Sharpe can be bought with
deeper drawdowns, we compare at matched risk where possible, against a \emph{static frontier}: the
best Sharpe of fixed equal-weight/minimum-variance blends at the same maximum drawdown (MaxDD).

\paragraph{The agent under audit.} \arc makes each decision in five steps (Fig.~\ref{fig:arch}):
\begin{enumerate}[leftmargin=1.4em,itemsep=1pt,topsep=2pt]
\item a \emph{regime detector}, a hidden Markov model (HMM), outputs a posterior over regimes;
\item a \emph{controller}, a transparent rule plus a contextual bandit, sets configuration parameters
such as the tail weight and risk level, but never weights or constraints;
\item a \emph{scenario generator}, a regime-conditioned diffusion model, draws $N{=}512$ scenarios of
next-period returns;
\item a \emph{runtime gate}, calibrated by adaptive conformal inference on realised losses, accepts or
rejects the scenario set; a rejected set is regenerated up to $B{=}3$ times, then replaced by a
block-bootstrap fallback;
\item an \emph{allocator} solves a convex program trading expected return against variance and
conditional value-at-risk ($\CVaR_\alpha$, the mean loss in the worst $5\%$ of scenarios) under the
constraints; its output $\wsol$ is clipped and renormalised into the committed action $w_t$.
\end{enumerate}
App.~\ref{app:setup} gives the equations and the full loop (Algorithm~\ref{alg:loop}).

\paragraph{Trust boundary and signals.} Adaptive components (the controller and layer A in
Fig.~\ref{fig:arch}) may be wrong or adversarial. They can degrade decision quality but cannot enlarge the feasible set, because
the constraint parameters $(\ell,u,\tau)$ and the budget $B$ belong to a trusted validator in the action-containment layer (C). The
system emits signals bearing on four distinct properties: \emph{decision quality} (is $w_t$ a good
risk-adjusted choice?), \emph{calibration} (do scenarios match outcomes? monitored by the gate),
\emph{admissibility} (does $w_t$ satisfy the constraints? checked by a post-hoc audit) and
\emph{provenance} (can the solver output be traced to a logged optimisation? recorded by the KKT
optimality conditions).

\paragraph{Protocol.} A \emph{paired} contrast compares two arms on identical simulated paths (common
random numbers). Unless marked otherwise, intervals are $95\%$ bootstrap intervals over seeds.
Experiments with $10$--$24$ paired seeds are \emph{confirmatory} and those with $5$--$6$ seeds
\emph{exploratory} (Table~\ref{tab:seeds}). Preregistration is author-reported, with one disclosed
amendment (App.~\ref{app:trials}).

\section{A claim-specific verification audit}\label{sec:framework}\label{sec:claims}
The audit takes a proposed conclusion, such as ``the generator is the bottleneck'', and asks whether
the evaluation can \emph{express} the improvement it concerns, whether the evidence \emph{identifies}
that improvement, and where the inference \emph{stops}. For \arc we pose four questions
(evidence and permitted claims in Table~\ref{tab:constructs}, App.~\ref{app:questions}):
\begin{enumerate}[label=\textbf{Q\arabic*},leftmargin=2em,itemsep=0pt,topsep=2pt]
\item Was improvement attainable under an action set able to express it?
\item Which component replacement recovers downstream value?
\item Does improved local fidelity verify downstream improvement?
\item Do joint repairs reveal structure that one-at-a-time checks miss?
\end{enumerate}

\subsection{Verification claims and verdicts}\label{sec:ledger}
A \emph{verification claim} is a conclusion recorded as
$\text{Claim}\to\text{Evidence}\to\text{Verdict}\to\text{Boundary}$, with one of four verdicts:
\begin{itemize}[leftmargin=1.2em,itemsep=1pt,topsep=2pt]
\item \emph{Supported}: the evidence supports the claim, which may be relied on only inside its
boundary.
\item \emph{Unsupported}: the evidence conflicts with the claim, or a required identification
condition is unmet; the claim must not be used for selection or certification.
\item \emph{Unresolved}: the experiment cannot distinguish the relevant explanations; the claim needs
more evidence, not clearance. It is not a weak form of \emph{supported}.
\item \emph{Not evaluated}: no experiment was run.
\end{itemize}
Table~\ref{tab:ledger} is the resulting ledger for \arc; App.~\ref{app:limits} expands each boundary.

\subsection{Oracle headroom: can the evaluation express the improvement?}\label{sec:lattice-main}\label{sec:framework-headroom}
An \emph{oracle} is a policy with perfect knowledge of the hidden state. It obeys the system's own
constraints and costs and differs from the system only in what it knows and which actions it may
take. Indexing oracles by an information set $I$ and an action set $A$ gives a lattice of attainable
values,
\begin{equation}
J^\star(I,A)=\sup_{\pi\in\Pi(I,A)}J(\pi),
\label{eq:lattice}
\end{equation}
where $\Pi(I,A)$ is the set of policies that use only information in $I$ and act in $A$, and $J$ is
the metric. $J^\star$ is monotone in both arguments (Prop.~\ref{prop:lattice}). A headroom estimate is
therefore a property of an $(I,A,J)$ cell, not of the information alone. A low estimate in one cell
cannot show that the information has little value; if headroom rises when $A$ widens at fixed $I$,
the low estimate came from the evaluation.

For \arc we fix $I$ to the true regime path and use four action sets of increasing expressiveness:
\textbf{L0}, the best static equal-weight/minimum-variance blend, with no regime knowledge;
\textbf{L1}, switching between those two portfolios with perfect regime knowledge; \textbf{L2}, a
separate weight vector $w^\star(k)$ for each regime $k$; and \textbf{L3}, the ex-post optimal weight
path.

\subsection{Oracle-replacement interventions: where is value lost?}\label{sec:framework-interv}\label{sec:attrib}
We replace one component at a time with an oracle counterpart and re-run the system. Every arm
receives the true regime, so regime detection is never the confound. The \emph{perception}
replacement feeds the true regime to the generator; the \emph{full generator bypass} hands the
allocator scenarios drawn from clean, regime-conditional history.

The evidence is asymmetric. A replacement that \emph{recovers} value identifies an observed
bottleneck. A replacement that recovers nothing leaves open whether the component matters, because a
downstream module that ignores its input masks any upstream repair. Such a null is recorded as
\emph{unresolved}, not \emph{unsupported}.

Some repairs act only jointly. The generator reaches the allocator through three channels: the
scenario set itself (tail shape and dependence), the conditional mean $\hmu$ and the covariance
$\hSig$. A complete $2^3$ factorial switches each channel independently between the generator and
clean history. With $v(S)$ the paired Sharpe gain when the channels in $S$ are repaired, we summarise
the table with Shapley values $\phi_i$ and Shapley--Taylor interaction indices $\phi_{ij}$
\citep{shapley1953,sundararajan2020shapleytaylor}, and call $\sum_{i<j}\phi_{ij}/v(N)$ the
\emph{interaction share}. These summaries are descriptive and non-additive (App.~\ref{app:attrib}).

\subsection{What can the runtime gate certify?}\label{sec:uncertainty}\label{sec:theory}
Scenarios come from the generator's model $p_\theta$, not from the true outcome law $\Prob$, so the
error in the allocator's CVaR estimate has two parts: a \emph{sampling} term, which shrinks as $N$
grows, and a \emph{misspecification} term ($p_\theta\neq\Prob$), which does not
(App.~\ref{app:proofs}). A third question sits above both: does the verification score
\emph{identify} the quantity it is read as bounding?

The gate is calibrated on realised returns, the only direct evidence about misspecification, and acts
as an \emph{operational evidence filter}. Its score compares one realised loss with a scenario CVaR.
CVaR is not elicitable \citep{gneiting2011}, so this score does not identify $\CVaR_\alpha$
(condition C5); a joint VaR--CVaR score would \citep{fissler2016}. In addition, refine-until-pass
selects on the calibration statistic, which a union bound corrects only for bounded $B$ and only if
each attempt is valid on its own (condition C6; App.~\ref{app:proofs}). The gate is therefore not a
decision-quality certificate: that claim is \emph{unsupported}, and no empirical result in
\S\ref{sec:results} depends on the gate.

\subsection{Trusted action boundary and attack surfaces}\label{sec:threat}
Controller-settable parameters are \emph{untrusted}: they can degrade quality but not enlarge the
feasible set. Constraint parameters and $B$ are \emph{trusted} and validator-owned
(App.~\ref{app:spec}; Prop.~\ref{prop:feas}). We specify five attacks: \textbf{A1} randomises
controller outputs within their ranges; \textbf{A2} poisons memory by flipping the regime posterior;
\textbf{A3} bypasses the runtime gate; \textbf{A4} is an adversarial controller that maximises
turnover; \textbf{A5} would compromise the validator. A1--A4 were executed; A5 was not.

\section{Verification findings}\label{sec:results}
All numbers are measured and scoped to U3 and the audited architecture. Each subsection answers one
question and ends with a verdict.

\begin{figure}[tbp]
\centering
\includegraphics[width=0.88\linewidth]{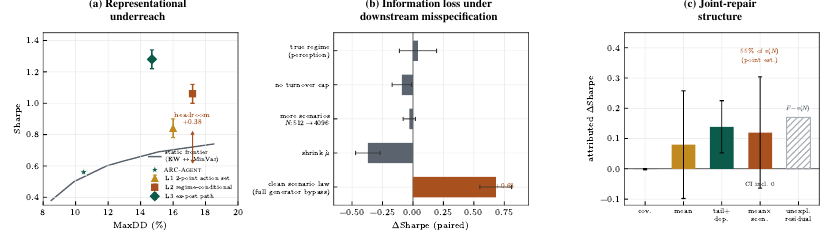}
\caption{\textbf{Three failure modes exposed by the audit} (Sharpe; seeds and intervals in
Table~\ref{tab:seeds}). \textbf{(a) Representational underreach}: regime-conditional weights (L2) sit
$+0.38$ above the static frontier at matched drawdown, but switching between two portfolios (L1) only
$+0.18$, a paired gain over L0 of $+0.07\pm0.08$. \textbf{(b) Interventional diagnosis}: the
true-regime replacement recovers nothing and is \emph{unresolved} because the generator can mask it;
removing the turnover cap, adding scenarios and shrinking the mean also fail; only the full generator
bypass recovers the gap ($+0.68$). \textbf{(c) Joint-repair structure} (exploratory; Sharpe only):
the mean-plus-scenario arm improves over baseline; the $55\%$ interaction share is a point estimate
whose six-seed interval includes zero.}
\label{fig:results}
\end{figure}

\subsection{Q1: Restricted evaluations can hide attainable headroom}\label{sec:oracle}
With perfect regime knowledge and identical constraints and costs, the four oracles reach Sharpe
$0.78$ (L0), $0.84$ (L1), $\mathbf{1.06}$ (L2) and $1.28$ (L3), against $0.56$ for \arc
(Table~\ref{tab:oracle}; $rf{=}1.6\%$, see App.~\ref{app:lattice} for $rf{=}0$).

L1 is an \emph{evaluation rule} with a restricted action set, not a weak baseline. Under it, perfect
regime information is worth a paired $+0.07\pm0.08$ Sharpe over L0, an interval that spans zero.
Widening only the action set raises the paired L2$-$L0 gain to $+0.28\pm0.08$. What limited the
measured headroom was the evaluation, not the environment. \arc does not capture this headroom (it
sits $0.04$ below the static frontier at matched drawdown), and no other scenario-based method does
either (App.~\ref{app:oracle}).

\emph{Verdict: supported}, for U3 and the stated action sets.

\subsection{Q2: Oracle replacements localise an observed generator bottleneck}\label{sec:bottleneck}
Giving the generator the true regime instead of the HMM posterior changes nothing ($16$ seeds,
paired: $+0.041\pm0.153$), although the HMM is poor (crisis recall falls from $0.54$ to $0.07$ as
tails fatten). This does \emph{not} clear perception: a generator that ignores its conditioning masks
any benefit, and the detector$\times$generator factorial was not run. Removing the turnover cap
($-0.09$), shrinking $\hmu$ ($-0.37$) and raising $N$ from $512$ to $4096$ ($-0.03$) also recover
nothing; more scenarios reduce sampling error but not misspecification. Only the full generator
bypass recovers the gap (Table~\ref{tab:intervene}).

The reason is visible in the generator's output. Given perfect regime conditioning, it reproduces
only $\mathbf{29\%}$ of the true bull--crisis difference in conditional means ($90$ of $313$\,bps),
which we call \emph{mean-gap retention}. A regime-conditional bootstrap, which resamples history from
the same regime, reproduces $\mathbf{101\%}$ and scores higher downstream ($0.85$ versus $0.56$). The
regime signal reaches the generator, yet no end-to-end score reports its loss.

\emph{Verdict: supported} for the generator as an observed bottleneck under this architecture;
\emph{unresolved} for perception.

\subsection{Q3: Local fidelity does not verify downstream improvement}\label{sec:mech}
Sweeps over diffusion steps and crisis oversampling implicate two contributors to the retention loss:
truncated reverse diffusion and under-representation of crises in training. A repair targeting both
raises retention from $\mathbf{47\%}$ to $\mathbf{186\%}$ and corrects the direction of the regime
mean shift (cosine $0.59\to0.97$). Yet the paired Sharpe change is $\mathbf{+0.075\pm0.174}$, and the
repaired generator ($0.63$) still trails the bootstrap ($0.85$). The repair overshoots the size of the
shift ($208\%$), so the mean vector's relative error grows ($0.82\to1.18$; Table~\ref{tab:retention}).

\emph{Verdict: supported.} Better local fidelity does not establish better decisions. The claim is
one of non-identification, not of a zero effect.

\subsection{Q4: Joint repairs reveal structure missed by one-at-a-time checks}\label{sec:masked}
Repairing the mean and the scenarios together improves on the all-generator baseline by $+0.221$
(six-seed paired $95\%$ CI $\pm0.160$), essentially all that the three channels recover together
($v(N)=+0.218$). Each channel alone does less (mean $+0.020\pm0.206$; scenarios $+0.079\pm0.066$;
covariance nothing). The Shapley--Taylor estimate assigns $55\%$ of $v(N)$ to the mean$\times$scenario
interaction (Table~\ref{tab:shapley}), but its six-seed interval ($\pm0.18$) includes zero, and the
factorial records Sharpe only (App.~\ref{app:factorial}).

\emph{Verdict: supported} for the joint repair improving on baseline; \emph{unresolved} for
superadditivity and the size of the interaction.

\subsection{Summary: different signals support different claims}\label{sec:hyp}
Table~\ref{tab:ledger} collects the verdicts. Evidence for one property does not transfer to another:
feasibility audits support admissibility for the reported runs only; the KKT record supports
provenance for $\wsol$, not $w_t$; and because the gate's score does not identify $\CVaR_\alpha$, the
decision-quality claim is \emph{unsupported}.

\begin{table}[tbp]\centering\scriptsize
\setlength{\tabcolsep}{3.5pt}
\caption{\textbf{Claim ledger for \arc} (boundaries expanded in App.~\ref{app:limits}).
\emph{Unresolved} is not a weaker form of \emph{supported}; it marks a claim the evidence cannot
settle either way.}
\label{tab:ledger}
\begin{tabular}{@{}p{0.30\linewidth}p{0.20\linewidth}p{0.24\linewidth}p{0.22\linewidth}@{}}
\toprule
Verification claim & Evidence & Verdict & Boundary \\
\midrule
Decision-relevant information is present and expressible & L2 information--action hierarchy & supported & U3, specified action set \\
\arc\ captures it & matched-risk frontier & \textcolor{fail}{unsupported} & current architecture \\
Generator is an observed bottleneck & full generator bypass & supported & current trained generator \\
Improving perception would not improve downstream value & true-regime replacement & \emph{unresolved} & masked by generator \\
Joint mean$+$scenario repair improves on the all-generator baseline & $2^3$ coalition table & supported & U3; 6 seeds, exploratory; Sharpe only \\
Mean$\times$scenario superadditivity & Shapley--Taylor interval & \emph{unresolved} & interval includes zero at 6 seeds \\
Executed actions are admissible & post-hoc residual checks & supported & reported runs only; not a universal guarantee \\
Solver outputs are traceable & KKT and active-set records & supported & applies to $\wsol$; auditability unmeasured \\
Implemented gate certifies decision quality (CVaR) & implemented one-step score & \textcolor{fail}{unsupported} & identification condition (C5) unmet \\
Trusted-path compromise contained & A5 & \emph{not evaluated} & attack unexecuted \\
\bottomrule
\end{tabular}
\end{table}

\section{Red-teaming the verifier and its trusted boundary}\label{sec:adversarial}
\begin{table}[tbp]\centering\small
\caption{\textbf{Attack surfaces and what survives each executed attack.} Decision quality,
admissibility and provenance are reported separately because they are different claims.
Exploratory: $5$ seeds, unpaired; levels in Table~\ref{tab:adv}. The feasibility audit is a post-hoc
check, not a construction guarantee; solver provenance is a record for $\wsol$, not KKT-optimality of
$w_t$. \textbf{A5 was not executed}, so no claim covers compromise of the trusted path.}
\label{tab:redteam}
\setlength{\tabcolsep}{3pt}
\footnotesize
\begin{tabular}{llccc}
\toprule
Attack & Claim surface & Effect on decision-quality evidence & Feas.\ audit & Solver prov. \\
\midrule
A3 gate bypass & quality-verification layer & \textcolor{fail}{\textbf{gate bypassed; metrics benign}} & \textbf{passed} & \textbf{retained} \\
A1 corrupted controller & untrusted configuration & degraded, within noise & \textbf{passed} & \textbf{retained} \\
A2 poisoned memory & untrusted memory & degraded, within noise & \textbf{passed} & \textbf{retained} \\
A4 adversarial controller & untrusted configuration & moved \emph{along} frontier & \textbf{passed} & \textbf{retained} \\
\midrule
\multicolumn{5}{l}{\emph{A5 validator compromise: specified but not executed; no robustness claim.}} \\
\bottomrule
\end{tabular}
\end{table}

If decision quality cannot be certified, what can the harness still be trusted to say? Attacks A1, A2
and A4 compromise adaptive components but leave the trusted boundary intact. Under all three, every
committed action passed the feasibility audit and kept its provenance record
(Table~\ref{tab:redteam}). This holds only for attacks that do not touch the validator.

\textbf{A3 attacks the verifier itself, and nothing notices.} Bypassing the gate removes the evidential
basis for the decision-quality claim, yet feasibility and provenance stay intact and the outcome stays
benign: under A3 the maximum drawdown, $9.9\%$, is the \emph{lowest} in the table. No outcome metric
we report would have flagged the bypass. A verifier can lose its evidential validity without any
immediately poor outcome, and feasibility and provenance cannot substitute for it, because they
never depended on the gate.

The preregistered containment hypothesis H3 was \emph{not supported}: under A4, \arc's apparent
advantage over a rule-based control ($0.83$ versus $0.72$ Sharpe) comes at a different risk level
($16.7\%$ versus $19.4\%$ MaxDD), and at matched risk the attack moved the system \emph{along} the
frontier (App.~\ref{app:redteam}).

\section{Lessons, limitations and conclusion}\label{sec:lessons}
\textbf{Lessons for reliable agent development} (in full in App.~\ref{app:principles}):
\begin{enumerate}[leftmargin=1.4em,itemsep=1pt,topsep=2pt]
\item Check that the evaluation can express the target improvement before reading anything into a
null.
\item An unchanged score fits both a useless and a masked component, so verify updates by
intervention.
\item Read interventions asymmetrically: a positive replacement localises a bottleneck; a null one
does not show irrelevance.
\item Decision quality, calibration, admissibility and provenance are different claims, and none
certifies another.
\item Red-team the verifier as well as the agent: it can be bypassed while outcomes stay benign.
\end{enumerate}

\textbf{Limitations.} Every statement is scoped to U3, one architecture and one generator family
(App.~\ref{app:limits}). Perception is unresolved under masking; the interaction results rest on six
seeds and Sharpe only; conditions C5 and C6 behind any certificate reading of the gate are unproved,
though no empirical result depends on the gate; A5 was not executed; and no language-model
controller, model judge or prompt-injection attack was evaluated.

\textbf{Conclusion.} The unit of agent verification should be the claim, not the score. Treating it
so turns a flat score into three separate questions: could the change have helped under the
evaluation used, where was value lost, and what does each check certify. In the case studied, this
exposed a restricted evaluation, an observed generator bottleneck, a gap between local fidelity and
decision value, and a verifier that could be bypassed without any visible effect on outcomes.

\begingroup
\small
\bibliographystyle{plainnat}

\endgroup

\clearpage
\appendix
\section*{Appendix}
\noindent\emph{The appendix is supplementary: it records specifications, full tables, derivations and boundary statements for the claims made in the main body.}

\section{Verification-audit principles, in full}\label{app:principles}
This case study motivates five verification-audit principles, which may transfer beyond it.
Verify the opportunity before verifying the system: our L1 oracle was a representation-restricted
evaluation protocol, not a weak baseline. Verify changes by intervention, since a null after an
upstream update establishes nothing when downstream masking is possible. Treat interactions as
first-class verification objects, because component-wise checks can obscure joint-repair
patterns; in the present factorial the joint arm improves over baseline, while the interaction
and conditional-contribution claims remain unresolved. Separate quality, admissibility and
provenance, because a feasible or auditable action may still be poor. And verify the verifier:
bypass, adaptive selection and invalid scoring rules are attack surfaces in their own right, and
our own gate fails two of them.

\section{Claim--evidence map}\label{app:questions}

Each question in \S\ref{sec:results} is answered by a specific, separately reported piece of
evidence; Table~\ref{tab:ledger} gives the resulting verdicts and their boundaries.
Table~\ref{tab:constructs} states, for each of the four questions posed in \S\ref{sec:framework}, the
evidence used and the claim that evidence permits.

\begin{table}[h]\centering\scriptsize
\setlength{\tabcolsep}{3pt}
\caption{\textbf{Verification questions posed of \arc, the evidence used, and the claim that
evidence permits.} Each row concerns a \emph{different} construct. Sharpe is an operational proxy for
risk-adjusted decision value, not a general measure of agent quality. Verdicts are collected in
Table~\ref{tab:ledger}.}
\label{tab:constructs}
\begin{tabular}{@{}p{0.04\linewidth}p{0.245\linewidth}p{0.335\linewidth}p{0.30\linewidth}@{}}
\toprule
 & Verification question & Evidence & Permitted claim \\
\midrule
Q1 & Was improvement attainable under an action set able to express it? &
L0--L3 oracle Sharpe at fixed information, widening the action set (\S\ref{sec:oracle},
Table~\ref{tab:oracle}); 24 seeds, paired, confirmatory &
Measured headroom depends on the action representation, not on the information alone \\
Q2 & Which component replacement recovers downstream value? &
Stage-wise oracle replacement with the true regime given to every arm (\S\ref{sec:bottleneck},
Table~\ref{tab:intervene}); 10--16 seeds, paired, confirmatory &
A positive replacement identifies an observed bottleneck; a null one does not identify
irrelevance \\
Q3 & Does improved local fidelity verify downstream improvement? &
Retention, directional cosine and vector error against paired Sharpe (\S\ref{sec:mech},
Table~\ref{tab:retention}); 6 seeds, exploratory &
Improved local scores do not establish downstream improvement \\
Q4 & Do joint repairs reveal structure that one-at-a-time checks miss? &
Complete $2^3$ coalition table with Shapley and Shapley--Taylor summaries (\S\ref{sec:masked},
Tables~\ref{tab:factorial}, \ref{tab:shapley}); 6 seeds, exploratory &
The joint arm improves over the all-generator baseline; its conditional contribution and the
interaction magnitude remain unresolved \\
\bottomrule
\end{tabular}
\end{table}

\begin{center}\footnotesize
\begin{tabular}{ll}
\toprule
Evaluation question & Evidence \\
\midrule
Is meaningful improvement attainable? & L0--L3 oracle hierarchy (\S\ref{sec:oracle}) \\
Did the full agent capture that improvement? & static risk frontier, paired contrasts (\S\ref{sec:oracle}) \\
Which stage forfeited the improvement? & stage-wise oracle replacements (\S\ref{sec:bottleneck}) \\
Can a module be masked by another? & complete $2^3$ channel coalition table (\S\ref{sec:masked}) \\
Does the runtime boundary preserve admissibility? & per-action feasibility audits (\S\ref{sec:adversarial}) \\
Is the action traceable? & solver records and active-set logs (App.~\ref{app:duals}) \\
Can the verifier itself fail silently? & A3 gate-bypass attack (\S\ref{sec:adversarial}) \\
Which claims remain unsupported? & unmet score identification (\S\ref{sec:theory}); unexecuted A5 \\
\bottomrule
\end{tabular}
\end{center}

\section{Oracle-replacement intervention outcomes}\label{app:intervene}
Table~\ref{tab:intervene} reports the stage-wise interventions summarised in
\S\ref{sec:bottleneck}. Every row is given the true latent regime, so detection is never
the confound, and rows are paired on common random numbers. ``\% of gap recovered'' is
measured against the L2 reference policy, not against an upper bound: L2 optimises
mean--variance per regime, so a CVaR allocator handed the clean scenario law can exceed it, which
is why I1 exceeds $100\%$. The remaining interventions of Fig.~\ref{fig:results}(b)
(scenario count, shrinkage) are reported inline in \S\ref{sec:bottleneck}.

\paragraph{The allocator is not the bottleneck (detail for \S\ref{sec:bottleneck}).} We
conjectured that the CVaR term, roughly $33\times$ the return term at a daily horizon, dominates
the objective and leaves the solution insensitive to the conditional mean. We tested it, and it
is false: swapping bull- for crisis-regime scenarios
moves the QP's solution by $118\%$ of the oracle's own rotation, aligned with it at
$\cos(\Delta w_{\mathrm{QP}},\Delta w_{\mathrm{oracle}})=+0.75$. The optimiser would therefore
have used a correct conditional mean had one reached it, which is what makes the generator, not
the allocator, the observed bottleneck of \S\ref{sec:bottleneck}.

\begin{table}[tbp]\centering\small
\caption{Oracle-replacement interventions ($10$ seeds, paired, common random numbers; true
regime given to every row). Only a full
generator bypass recovers the measured gap; no other intervention recovers a detectable
share of it. The factorial of Table~\ref{tab:factorial} subsequently separates that single
bypass into the conditional-mean, covariance and scenario tail/dependence channels; the two
experiments use different baselines and seed counts and their levels are not directly
comparable (\S\ref{sec:masked}). ``\% of gap recovered'' is
$(\text{Sharpe}-\text{Sharpe}_{\mathrm{I0}})/(\text{Sharpe}_{\mathrm{L2}}-\text{Sharpe}_{\mathrm{I0}})$.
$\pm$ values are across-seed standard errors over $10$ seeds under common random numbers. \emph{L2 is a reference policy, not an upper bound}: it optimises
mean--variance per regime, so a CVaR allocator given the clean scenario law can exceed it.}
\label{tab:intervene}
\begin{tabular}{@{}p{0.40\linewidth}ccc@{}}
\toprule
Intervention & Sharpe & Turn.\% & \% of gap recovered \\
\midrule
I0\; baseline (scenario moments, cap $0.20$) & $0.465\pm0.090$ & $19.7$ & n/a \\
I1\; $+$ \textbf{clean regime-conditional scenarios and moments} \emph{(full generator bypass)} & $\mathbf{1.141\pm0.127}$ & $18.8$ & $\mathbf{+121\%}$ \\
I2\; $+$ no turnover cap & $0.377\pm0.084$ & $60.2$ & $-16\%$ \\
\midrule
\;\;\; L2 regime-weight reference policy & $1.023\pm0.111$ & $8.8$ & n/a \\
\bottomrule
\end{tabular}
\end{table}

\section{Factorial over decision functionals}\label{app:factorial}
The allocator reads the scenario law through three separable channels, so we can swap
each independently between the generator's output and clean regime-conditional history.
Table~\ref{tab:factorial} reports all eight cells of the $2^3$ design, paired
against the all-generator baseline. The two-channel rows are the informative ones:
covariance adds nothing to any other repair, so the design collapses to a
mean$\times$scenarios interaction on a nuisance third factor. Note the asymmetry in the
single-channel intervals: the scenario-only effect is $+0.079\pm0.066$, an interval that
excludes zero, so that channel does deliver a modest standalone repair; the mean-only effect
is $+0.020\pm0.206$, which is wide enough not to exclude the joint effect either.
Descriptively, the joint-arm point estimate is larger than either single-arm point estimate.
Because the paired contrast between the joint and scenario-only arms is unavailable from the
retained aggregate summaries, this comparison does not establish a nonzero conditional mean
contribution or superadditivity.

The contrast that bears most directly on the mean channel is
$v(\{\text{mean},\text{scenario}\})-v(\{\text{scenario}\})=+0.221-0.079=+0.142$: what the
clean conditional mean adds \emph{once the scenario channel is already repaired}. We report
the point estimate only. Its seed-bootstrap interval would be a paired contrast between two
repaired cells rather than against the common baseline, and the per-seed pairs those two
cells require were not retained, so the interval cannot be recovered from the available
aggregate summaries. We therefore report this contrast descriptively and do not use it to
support the interaction claim.

\begin{table}[tbp]\centering\small
\caption{\textbf{The complete $2^3$ factorial: all eight cells.} Paired $\Delta$Sharpe
against the all-generator baseline ($6$ seeds, common random numbers, $95\%$ CI over seeds).
Reading down the single-channel rows, covariance is worth nothing; reading across the pairs,
\emph{adding} covariance to any other repair changes nothing (mean $+0.020$ versus
mean$+$covariance $+0.020$; scenarios $+0.079$ versus scenarios$+$covariance $+0.079$), while
mean$+$scenarios recovers essentially the whole effect and adding covariance on top of it
does not improve matters ($+0.221$ vs.\ $+0.218$). At six seeds the design is
\textbf{underpowered to distinguish superadditivity from additivity}. Descriptively the
joint-arm estimate exceeds either single-arm estimate; because the paired joint-versus-
scenario-only contrast cannot be recovered from the retained summaries, neither
superadditivity nor a nonzero conditional mean contribution is established. The interaction
estimate and its seed-bootstrap interval are reported in Table~\ref{tab:shapley}; at six
seeds that interval includes zero. \emph{These cells report Sharpe only.} We did not record
MaxDD or turnover per cell, so the risk-axis check we insist on elsewhere
(\S\ref{sec:oracle}, \S\ref{sec:adversarial}) has not been applied to the factorial: a
channel repair that moved a cell along the risk frontier rather than off it would not be
visible here.}
\label{tab:factorial}
\begin{tabular}{lccc}
\toprule
Channels replaced with clean regime-conditional history & Sharpe & $\Delta$Sharpe & $95\%$ CI \\
\midrule
none (all generator; baseline)                  & $0.635$ & n/a & n/a \\
covariance $\hat\Sigma$ only                    & $0.634$ & $-0.001$ & $\pm0.001$ \\
conditional mean $\hat\mu$ only                 & $0.655$ & $+0.020$ & $\pm0.206$ \\
scenarios only (tail $+$ dependence)            & $0.713$ & $+0.079$ & $\pm0.066$ \\
\midrule
mean $+$ covariance                             & $0.654$ & $+0.020$ & $\pm0.212$ \\
scenarios $+$ covariance                        & $0.714$ & $+0.079$ & $\pm0.066$ \\
\textbf{mean $+$ scenarios}                     & $\mathbf{0.856}$ & $\mathbf{+0.221}$ & $\pm0.160$ \\
all three                                       & $0.853$ & $+0.218$ & $\pm0.164$ \\
\bottomrule
\end{tabular}
\end{table}

\paragraph{Reconciling Table~\ref{tab:intervene} with Table~\ref{tab:factorial}.}
These are different experiments and their levels are not directly comparable, so we state
the relationship explicitly.
I1 is a \emph{full generator bypass}: the allocator is handed the clean regime-conditional
empirical scenario law (its scenario set together with the moments computed from it) in
place of anything the diffusion model produces. The $2^3$ factorial instead decomposes that
single bypass into its three separable channels and swaps them independently. Consequently
(i) I1 is the union of all three factorial repairs plus whatever the bypass changes that
the three-channel parameterisation does not capture, and (ii) the two arms are run at
different seed counts ($10$ vs.\ $6$) against differently-constructed baselines ($0.465$ for
I0, $0.635$ for the all-generator factorial cell), which are themselves trained separately,
as elsewhere in this paper (\S\ref{sec:masked}). The gap between I1's $1.141$ and the
factorial's all-three cell at $0.853$ is therefore \emph{not} evidence that the channels
fail to compose; it is the combined effect of a wider intervention, a different baseline,
and independent training runs. No paired arm is available to decompose that difference: with
independently trained baselines and different seed counts, the remaining discrepancy is not
identifiable from the reported runs, and the boundary is recorded as item (ii) of
App.~\ref{app:limits}.

\paragraph{Mechanisms behind the mean collapse (detail for \S\ref{sec:mech}).} The collapse
implicates two candidate contributors to the observed retention loss. \emph{Truncated reverse
diffusion}: mean-gap retention is
$19\%$ at $S{=}20$ steps, $30\%$ at our preregistered $S{=}50$ and $76\%$ at $S{=}100$.
\emph{A starved conditioning class}: oversampling crisis $15\times$ lifts retention
$30\%\to53\%$. The reported $\eta$ sweep does not support the tail-weighted loss as the primary
driver of the retention loss: retention is flat in $\eta$ and is
$29\%$ even at $\eta{=}0$. Repairing both mechanisms raises retention $47\%\to186\%$ within
the repair arm and moves the paired Sharpe by $+0.075\pm0.174$. Arms are trained separately,
so unrepaired retention differs slightly across them ($29\%$ deployed, $30\%$ step-sweep,
$47\%$ repair); we always quote an arm against its own baseline.

\paragraph{The retention quantities, defined.} Three distinct quantities appear above and
are easy to conflate, so Table~\ref{tab:retention} defines each one. All are computed
with the true regime given to the generator; none is a decision metric.

\begin{table}[tbp]\centering\small
\caption{\textbf{Retention metrics.} $\Delta\mu^{\mathrm{true}}$ is the true bull-minus-crisis
conditional mean vector and $\Delta\hat\mu$ the generator's. The first row is the headline
$29\%$; the middle rows are the same scalar computed in other arms; the last two rows are
the \emph{directional} and \emph{vector} diagnostics that show why the scalar is
insufficient.}
\label{tab:retention}
\begin{tabular}{llll}
\toprule
Quantity & Definition & Reported values & Arm \\
\midrule
Mean-gap retention & $\|\Delta\hat\mu\|_1/\|\Delta\mu^{\mathrm{true}}\|_1$ & $29\%$ ($90$/$313$\,bps) & deployed pipeline \\
\quad same, step sweep & as above & $19$/$30$/$76\%$ ($S{=}20$/$50$/$100$) & step-count sweep \\
\quad same, repair arm & as above & $47\%\to186\%$ & repair arm \\
Directional similarity & $\cos(\Delta\hat\mu,\Delta\mu^{\mathrm{true}})$ & $0.59\to0.97$ & repair arm \\
Size ratio & $\|\Delta\hat\mu\|_2/\|\Delta\mu^{\mathrm{true}}\|_2$ & $208\%$ (overshoot) & repair arm \\
Relative vector error & $\|\Delta\hat\mu-\Delta\mu^{\mathrm{true}}\|_2/\|\Delta\mu^{\mathrm{true}}\|_2$ & $0.82\to1.18$ & repair arm \\
\bottomrule
\end{tabular}
\end{table}

\section{End-to-end comparison at matched risk}\label{app:mainperf}
This table supports the claim that end-to-end performance alone is insufficient. It is not a
performance result in its own right, which is why it sits here (\S\ref{sec:hyp}).

\begin{table}[tbp]\centering\small
\caption{\textbf{Does the complete system deliver genuine improvement at matched risk?}
(\textbf{exploratory}, 5 seeds; $\pm$ is the across-seed standard deviation, not a paired
contrast; those are in Tables~\ref{tab:intervene}--\ref{tab:factorial}.) The static frontier is
the reference: ``Excess'' is Sharpe relative to the frontier at the same realised MaxDD, the
comparator being the best static blend within the restricted EW--MinVar family. \arc\ does not
outperform the frontier, whereas the regime-conditional bootstrap (ten lines, no generative
model) does, which is what shows the environment contains usable regime information. The
$-$CVaR ablation lies beyond the frontier's risk endpoint (off-supp.), as does mean--variance
(Table~\ref{tab:restbase}).}
\label{tab:perf}
\setlength{\tabcolsep}{4.5pt}
\begin{tabular}{lccccc}
\toprule
Method & Sharpe & MaxDD\% & Turn.\% & Frontier & Excess \\
\midrule
Equal weight            & $0.74\pm0.46$ & $18.5$ & 0.0 & 0.74 & $+0.00$ \\
Minimum variance        & $0.37\pm0.32$ & $\mathbf{8.5}$ & 2.8 & 0.38 & $-0.01$ \\
Gaussian $\to$ QP       & $0.41\pm0.27$ & $12.1$ & 19.7 & 0.60 & $-0.19$ \\
Block bootstrap $\to$ QP& $0.44\pm0.28$ & $10.3$ & 19.6 & 0.52 & $-0.08$ \\
\textbf{Regime-cond.\ bootstrap $\to$ QP} & $\mathbf{0.85\pm0.10}$ & $10.9$ & 19.6 & 0.53 & $\mathbf{+0.32}$ \\
\midrule
\textbf{\arc\ (full)}   & $0.56\pm0.29$ & $10.5$ & 18.1 & 0.60 & $-0.04$ \\
\quad $-$ verifier      & $0.41\pm0.29$ & $9.7$  & 18.3 & 0.48 & $-0.07$ \\
\quad $-$ CVaR term     & $0.84\pm0.50$ & $19.9$ & 18.4 & \emph{off-supp.} & n/a \\
\bottomrule
\end{tabular}
\end{table}

\section{Remaining baselines}\label{app:restbase}
Table~\ref{tab:restbase} reports the five implemented baselines not shown in
Table~\ref{tab:perf} or Table~\ref{tab:adv}, so that all eleven are accounted for. They are
run under the same box, turnover cap and costs as every other method, on the same paths.
Like Table~\ref{tab:perf} these are \textbf{exploratory} five-seed results and are not
paired contrasts.

\begin{table}[tbp]\centering\small
\caption{The five remaining implemented baselines (\textbf{exploratory}, $5$ seeds; mean
$\pm$ across-seed standard deviation). None reaches the regime-conditional bootstrap's
$0.85$. Mean--variance attains the highest Sharpe of the five but at $19.2\%$ drawdown,
beyond the static frontier's risk endpoint, the same risk-axis confound the frontier test
exists to expose, and the reason it is not evidence against our conclusion.}
\label{tab:restbase}
\begin{tabular}{lccc}
\toprule
Method & Sharpe & MaxDD\% & Turn.\% \\
\midrule
Risk parity                & $0.52\pm0.30$ & $11.4$ & $1.5$ \\
Mean--variance             & $0.71\pm0.50$ & $19.2$ & $12.0$ \\
Hierarchical risk parity   & $0.36\pm0.29$ & $8.7$  & $3.2$ \\
Historical CVaR            & $0.45\pm0.29$ & $10.1$ & $19.6$ \\
DRO-CVaR                   & $0.34\pm0.27$ & $9.7$  & $19.9$ \\
\bottomrule
\end{tabular}
\end{table}

\section{Oracle hierarchy}\label{app:oracle}
The hierarchy of \S\ref{sec:oracle} in full. All four oracles have perfect knowledge of
the latent regime and run under identical box, turnover and cost settings; they differ
only in the action set they are permitted to express that knowledge through. Figure
\ref{fig:oracle} plots them against the static EW/MinVar frontier, so the headroom is
read as a \emph{vertical} gap at matched drawdown.

\begin{figure}[!tb]
\centering
\includegraphics[width=0.78\linewidth]{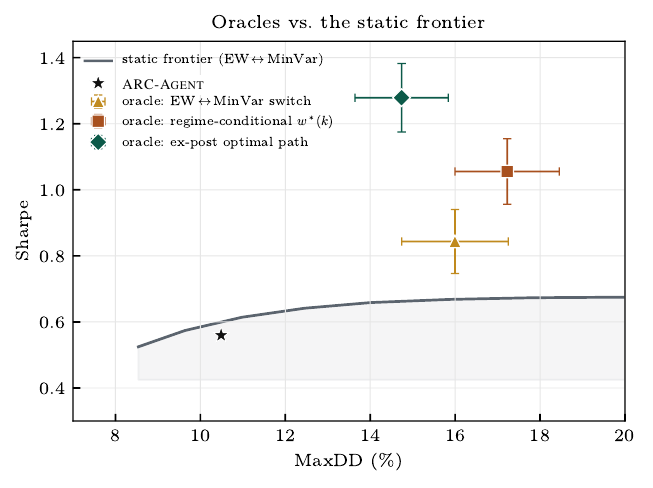}
\caption{Oracles versus the static EW/MinVar frontier ($24$ seeds; bars are $\pm1$
s.e.; confirmatory, paired under common random numbers; Sharpe at $rf{=}1.6\%$). Construct:
attainable decision value of perfect latent-state information under each action representation.
The wider L2 and L3 action representations exhibit positive matched-risk oracle headroom in U3.
\arc\ lies close to the evaluated static frontier under the reported matched-risk comparison.
Unresolved: whether this headroom transfers beyond U3 or beyond the four action sets shown.}
\label{fig:oracle}
\end{figure}

\begin{table}[tbp]\centering\small
\caption{Oracle hierarchy ($24$ seeds, paired). \textbf{All Sharpe levels in this table are
computed at the preregistered $rf{=}1.6\%$}, not at the $rf{=}0$ used for the primary U3
metrics (App.~\ref{app:metrics}); the paired L2$-$L0 contrast is $+0.28\pm0.08$ here and
$+0.248$ at $rf{=}0$ (\S\ref{sec:oracle}). ``Excess'' is Sharpe over the static
EW/MinVar frontier \emph{at the same drawdown}, a different quantity from that paired
contrast. The action set decides the answer: L1 is
near-tautological, while L2, the oracle appropriate to a regime-conditional
allocator, shows positive paired headroom under the wider action representation.}
\label{tab:oracle}
\begin{tabular}{llccc}
\toprule
& Oracle over\ldots & Sharpe & MaxDD\% & Excess \\
\midrule
L0 & (no regime knowledge) & $0.78$ & $12.4$ & $0.00$ \\
L1 & two-point action set (EW$\leftrightarrow$MinVar) & $0.84$ & $16.0$ & $+0.18$ \\
L2 & \textbf{regime-conditional weights $w^\star(k)$} & $\mathbf{1.06}$ & $17.2$ & $\mathbf{+0.38}$ \\
L3 & ex-post optimal weight path & $1.28$ & $14.7$ & $+0.62$ \\
\midrule
& \textbf{\arc} & $0.56$ & $10.5$ & $\mathbf{-0.04}$ \\
\bottomrule
\end{tabular}
\end{table}

\section{Red-team results}\label{app:redteam}

Full numbers for the four \emph{executed} attacks summarised at verification-surface level
in Table~\ref{tab:redteam} of \S\ref{sec:adversarial}. A5 (constraint injection
against the validator) is registered and was not run, so it has no row here and no claim
is made about it. The two comparison rows at the foot of the table are the rule-based
weights-emitting controls of App.~\ref{app:metrics}, standing in for an agent that emits weights
directly with no solver to configure; they are \emph{not} language-model agents, and no LLM
was executed anywhere in this work.

\begin{table}[tbp]\centering\small
\caption{Red-team results, \textbf{measured} ($5$ seeds; $\pm$ is the across-seed
standard deviation, not a paired contrast). Feasibility and provenance survive every
\emph{executed} attack (A1--A4), but so does feasibility for an agent that merely
projects, which is why the surviving claim concerns auditability.
\textbf{``Infeas.'' is a post-hoc audit, not a construction guarantee}: every committed
weight vector is checked against the budget ($10^{-4}$), box ($10^{-6}$) and turnover
($10^{-4}$) tolerances, and the column counts the rebalances that fail
(App.~\ref{app:spec}). \textbf{``Solver record''} reports whether the underlying convex
solve produced a complete active-set and KKT record; it does \emph{not} imply that the
post-processed committed vector is itself KKT-optimal.
\textbf{A5 (validator injection) was not executed and has no row.} The two comparison
rows are rule-based weights-emitting controls; no LLM was executed.}
\label{tab:adv}
\begin{tabular}{llccccc}
\toprule
& Attack & Sharpe & MaxDD\% & Infeas. & Solver record \\
\midrule
\multirow{5}{*}{\rotatebox{90}{\arc}}
& none (base)               & $0.56\pm0.29$ & $10.5$ & \textbf{0} & \textbf{100\%} \\
& A1 corrupted controller   & $0.50\pm0.41$ & $12.4$ & \textbf{0} & \textbf{100\%} \\
& A2 poisoned memory        & $0.52\pm0.28$ & $10.4$ & \textbf{0} & \textbf{100\%} \\
& A3 conformal-gate bypass& $0.55\pm0.29$ & $9.9$  & \textbf{0} & \textbf{100\%} \\
& A4 adversarial controller (injection proxy)      & $0.83\pm0.38$ & $16.7$ & \textbf{0} & \textbf{100\%} \\
\midrule
\multicolumn{2}{l}{weights-emitting agent (no solver)}   & $0.50\pm0.42$ & $25.7$ & \textcolor{rust}{\textbf{345}} & \textcolor{rust}{\textbf{0\%}} \\
\multicolumn{2}{l}{weights-emitting agent $+$ projection} & $0.72\pm0.40$ & $19.4$ & \textbf{0} & \textcolor{rust}{\textbf{0\%}} \\
\bottomrule
\end{tabular}
\end{table}

\section{Why scalar local signals are insufficient (detail for \S\ref{sec:mech})}\label{app:scalars}
Does any single upstream score verify downstream decision quality? Ours does not. Separate
step-count and crisis-oversampling sweeps implicate two candidate
contributors to the observed retention loss: truncated reverse diffusion and limited
representation of the crisis conditioning class. The combined repair targets both contributors
and therefore does not isolate their individual downstream effects; the reported $\eta$ sweep
does not support the tail-weighted loss as the primary driver (App.~\ref{app:factorial}).
Repairing both raises retention $\mathbf{47\%\to186\%}$ while the paired Sharpe moves
$\mathbf{+0.075\pm0.174}$, with an interval spanning zero; the repaired generator ($0.63$) still
loses to the bootstrap ($0.85$). The vector diagnostics show why: the repair fixes the
\emph{direction} of the regime mean-shift ($\cos 0.59\to0.97$) but \emph{overshoots} its size
($208\%$), so the relative error of the mean vector gets \emph{worse} ($0.82\to1.18$).
Distributional fidelity, conditional mean retention, verifier pass rate and downstream utility
are not interchangeable, and a module can pass a local verification metric while failing the
system-level objective. Our own measurements do not support \emph{decision-functional retention}
as the missing diagnostic either. This is the decision-focused-learning critique
\citep{elmachtoub2022spo,donti2017task} in sharp form, and it constrains what a remedy must
target: not a better distributional score --- likelihood, the verifier's calibration tests, or
even the conditional mean the optimiser literally reads --- but the decision itself
(App.~\ref{app:related}).

\section{Preregistered verdicts in full (detail for \S\ref{sec:hyp})}\label{app:verdicts}
Table~\ref{tab:perf} gives the comparison for the arms that isolate scenario quality, and all
five preregistered verdicts follow, each against its frozen failure condition.
\textbf{H1 (tail control): inconclusive, not refuted.} \arc's MaxDD ($10.5$) versus a block
bootstrap ($10.3$) is a paired $+0.2$pp against a design with little power at that scale;
what we \emph{can} say is that the diffusion model shows no detectable benefit even with
perfect conditioning. \textbf{H1b (does the CVaR term do the work?): supported, with a
twist.} Removing it inflates MaxDD $10.5\to19.9$ \emph{and} raises Sharpe to $0.84$, the
highest in the table: a drawdown reduction bought with a real Sharpe cost, precisely the
risk-axis trade the frontier test exposes. \textbf{H2 (union-bound correction): not
exercised.} The correction never binds, so nothing here tests Prop.~\ref{prop:union}.
\textbf{H3 (containment of corrupted intent): not supported} (\S\ref{sec:adversarial}).
\textbf{H4: not executed.} No LLM controller was run.

\section{Active-set analysis from the logged duals}\label{app:duals}
\paragraph{Which constraints actually bind (using the logged duals).} Provenance is
worth more than a $100\%$ checkbox only if the KKT record is used. We therefore
report active-set frequencies from the logged duals: in the illustrative run the
turnover cap is slack in calm regimes (mean $\|\Delta w\|_1 \approx 8\%$ vs.\ the
$20\%$ cap) but \emph{binds in $\sim$$38\%$ of crisis-onset rebalances}, when the
agent wants to de-risk faster than the cap allows; the box binds on the volatility
proxy and short-duration sleeves during flights-to-quality. This is the one
genuinely informative thing provenance yields, and it also sharpens the case for
executing A5: the feasibility guarantee is not guarding an idle door. The turnover
constraint is exactly what an attacker (or a panicked controller) would most want to
relax, and it is the one the validator refuses to expose. A5 remains registered and
unexecuted (\S\ref{sec:adversarial}).

\paragraph{What the record does and does not give.} The KKT record certifies feasibility
and stationarity of the \emph{solver output} $w^{\mathrm{sol}}$ for the problem it was
solved against. The subsequently committed action $w_t$, obtained from $w^{\mathrm{sol}}$
by clipping and renormalisation (App.~\ref{app:spec}), \emph{passed the specified post-hoc
feasibility audit}, but we do not claim it retains the solver's KKT optimality certificate
(Prop.~\ref{prop:feas}). Neither claim, by itself, re-derives $w_t$ from
scratch. \emph{Deterministic replay} additionally
requires the realised scenario matrix (or the seed and generator state that produced it),
the model checkpoint hash, the solver name, version and tolerances, and the tie-breaking
rule used when the active set is degenerate. Our log records these alongside the
certificate, but the guarantee we state and prove is the verification claim above; replay
is an implementation property of the release, not a theorem.

\section{Boundaries of established knowledge}\label{app:limits}
This appendix is organised by \emph{claim} rather than by experiments that ``should'' be run.
Each entry states the exact limit of the existing evidence, and none is a promise to fix it for
this paper. With the claim--evidence ledger (Table~\ref{tab:ledger}), these entries record
what the audit did not establish.

\paragraph{1. Domain boundary.} Every empirical statement is scoped to U3 and to
one decision architecture. We do not claim ``generative modelling is a bottleneck in
modular decision systems'' in general; we claim that \emph{this} conditional
diffusion model, trained this way, forfeits the conditional mean on \emph{this} benchmark,
that the repair arm targets two plausible contributors to the collapse, and that repairing them does not
repair the decision. The regime-conditional bootstrap's advantage is partly favoured by
U3's stationary within-regime conditionals: a resampler inherits the conditional law by
construction, so it is close to a near-oracle generator here. Establishing where conditional
\emph{generation} could pay requires sweeping within-regime nonstationarity, regime overlap
and duration heterogeneity, which we have not done.

\paragraph{2. Masking boundary.} Perception cannot be judged independently while the
generator ignores its conditioning. Perfect detection does not help \emph{through the
current generator}; that does not establish perception is unimportant, and perception may
matter once the generator is fixed. The factorial detector $\times$ generator matrix that
would separate the two was not run, so the verdict in Table~\ref{tab:ledger} is
\emph{unresolved}: neither supported nor refuted.

\paragraph{3. Intervention boundary.} The stage interventions
(Table~\ref{tab:intervene}) and the channel factorial (Table~\ref{tab:factorial}) use
different baselines and seed counts and were trained separately, so the difference between
the full generator bypass and the all-three-channel repair \emph{cannot be decomposed}
(\S\ref{sec:masked}, App.~\ref{app:factorial}). Rerunning both against one baseline under
common random numbers at a single seed count would identify whether the three channels exhaust
the bypass.

\paragraph{4. Risk-axis boundary.} Table~\ref{tab:factorial} reports $\Delta$Sharpe alone:
the factorial lacks MaxDD and turnover per cell, so we cannot show that a channel repair
moves a portfolio \emph{off} the risk frontier rather than along it, the confound our own
frontier test exists to expose and which we apply everywhere else. The risk-axis position of
each factorial cell is therefore not identified by the reported runs.

\paragraph{5. Architecture boundary.} One conditional generator family is evaluated.
Without conditional VAE, normalising-flow, plain conditional diffusion and
moment-regularised variants, the localisation result remains single-architecture.

\paragraph{6. Auditability boundary.} The surviving governance claim is an \emph{argument
from the available records}, not a measurement: we have not scored whether an auditor can
recover the binding constraint and the reason-for-trade from the log alone. This is the
paper's weakest link, since it is the one claim asserted without evidence.

\paragraph{7. Controller boundary.} No LLM controller was executed. All measured runs use a
\textbf{rule-plus-contextual-bandit controller}; the system under test is a planning loop
over typed tool configurations, and A4 is an adversarial-\emph{controller} proxy, not a live
prompt injection. Nothing here bears on language-model reasoning or prompt-injection
robustness.

\paragraph{8. Statistical boundary.} Seed counts are $5$--$24$ and inconsistent across
tables. The factorial is the binding case: at six paired seeds the mean-only interval is
very wide and the interaction interval includes zero, so the \emph{magnitude} of the
mean--scenario interaction is unresolved and the headline $55\%$ is a point estimate.
Twenty to thirty paired seeds under common random numbers would settle it, and
App.~\ref{app:cost} records that compute is not what is stopping us.

\paragraph{9. Certificate boundary.} The implemented one-step score does not identify
$\CVaR_\alpha$ (\S\ref{sec:theory}, App.~\ref{app:proofs}), so the executed gate is an
operational filter and not a validated decision-quality certificate. H2 was never exercised,
so nothing in this paper tests the refinement correction either.

\paragraph{10. Threat-model boundary.} Trusted-path compromise was not executed. A5 targets
the validator, and the solver, post-solve cleanup and audit code are equally inside the
trusted path; robustness to compromise of any of them is unevaluated, so the feasibility
result of Table~\ref{tab:redteam} is evidence about attacks that leave those components
alone.
\paragraph{Future work.} Decision-focused training of the scenario generator; a
disaggregated retention analysis over the mean, covariance and scenario tail/dependence channels; a second
decision domain (inventory, energy dispatch) to test whether the localisation protocol
transfers; and conformal \emph{risk control} on a monotone loss in place of coverage of a
CVaR difference.

\section{Extended related work}\label{app:related}
\paragraph{Agentic systems and tool use.}
ReAct-style interleaving of reasoning and acting \citep{yao2023react} and
code-as-action agents \citep{wang2024codeact} motivate our typed configuration
interface. Recent work couples language models with formal optimisation:
autoformulation of optimisation models \citep{astorga2025autoformulation} and
generative simulation with gradient-free calibration \citep{holt2025gsim} show
that LLMs are most reliable when they \emph{configure} a solver rather than
replace it. Lightweight online memory without finetuning \citep{holt2025memory}
informs our regime-indexed episodic store. We adopt this ``LLM plans, solver
acts, verifier gates'' pattern as a governance principle rather than a
convenience. A systems literature instruments running systems at stable boundaries rather than
inside volatile application code \citep{zheng2025agentsight,dong2024agentops,koohestani2025agentguard};
our validator, feasibility audit and KKT records are instances of that discipline, and the
distinction we insist on is that \emph{observability provides evidence; verification specifies
what property that evidence supports}.

\paragraph{Regimes and structured generative time series.}
HMMs capture structural breaks and state dependence
\citep{hamilton1989,kim1994,ang2012}; we reuse their posteriors as conditioning
signals. GAN/VAE and diffusion generators improve realism
\citep{yoon2019timegan,desai2021timevae,rasul2021,tashiro2021csdi}, but are
typically unconditioned and loosely coupled to decisions; \arc instead couples a
regime-conditioned, tail-weighted diffusion generator to the allocator as a
callable, verifiable tool.

\paragraph{Decision-focused learning.} A parallel literature learns predictive
models \emph{through} the optimiser rather than upstream of it: smart
``predict-then-optimize'' \citep{elmachtoub2022spo}, task-based end-to-end learning
\citep{donti2017task}, and differentiable optimisation layers
\citep{amos2017optnet,agrawal2019cvxpylayers}, with portfolio applications
\citep{butler2023} and, more recently, decision-focused \emph{generative} models
\citep{wang2025gendfl,zhao2026diffusiondfl,xie2026dwfm}, which target exactly the
generator-to-decision coupling that our \S\ref{sec:mech} measurement finds broken. \arc\ is deliberately \emph{not} end-to-end: it keeps the
solver a fixed, certified, non-differentiated component so that its guarantees
survive a compromised upstream, and treats decision-aware training as future work. The trade-off
this involves is easy to overstate. It is \emph{not} optimality versus auditability: an upstream
model can be trained through a differentiable
convex layer \citep{amos2017optnet,agrawal2019cvxpylayers} and still be deployed against the
same fixed solver, which still emits the same KKT record, so auditability survives
decision-focused training. What decision-focused training actually costs is
\emph{modularity}: the generator becomes coupled to a particular downstream program and
to its training pipeline, and can no longer be swapped, certified, or attacked in isolation
in the way \S\ref{sec:threat} relies on. That coupling is the central design choice we are
making, and this literature is the alternative branch. Our own
measurement (\S\ref{sec:mech}) is an argument for paying that cost.

\paragraph{Distribution-free uncertainty quantification.} The verifier is implemented
as a conformal-style \emph{operational evidence filter}. Conformal prediction
\citep{vovk2005} and conformal risk control \citep{angelopoulos2023} provide
finite-sample, distribution-free coverage guarantees and are the right foundation
for a valid act-time certificate, but the implemented score does not satisfy the
identification conditions required for that interpretation (\S\ref{sec:theory}).

\paragraph{Robust allocation and governance.}
Distributionally robust and CVaR formulations address fat tails
\citep{rockafellar2000,delage2010} but hinge on scenario quality; we feed a
regime-conditioned scenario set through an operational evidence filter into a convex allocator with
Ledoit--Wolf shrinkage \citep{ledoit2004}. Multi-agent coordination
\citep{lowe2017} inspires our role-based tools, but our emphasis is on
\emph{auditability}: the allocator's KKT system provides solver-side audit trails
that align with the reliability/governance goals of structured-data agents.

\section{The U3 benchmark}\label{app:u3}
U3 is a controllable synthetic regime-shift generator over $d$ correlated instruments.
A latent Markov chain with $K{=}3$ states (\emph{bull}, \emph{stagnation},
\emph{crisis}) drives a state-dependent return law: per-state mean and covariance, with
crisis drawn from a fat-tailed, left-skewed law whose lower-tail dependence exceeds its
average correlation, so diversification degrades exactly when it is needed
(Fig.~\ref{fig:u3}b,c). There are $d{=}10$ instruments, three of them defensive. Returns
follow a regime-dependent factor model driven by a skew-$t$ market factor with shared
mixing, so no regime is Gaussian: degrees of freedom fall from $12$ (bull) through $7$
(stagnation) to $3.5$ (crisis) and skew steepens from $0$ to $-0.80$
(App.~\ref{app:spec}). Crisis is rare and persistent: the stationary crisis share is
$4.79\%$ and the mean crisis episode is about $45$ trading days, longer than the $21$-day
rebalance interval; the evaluated $1{,}500$-day development sequence realises $66$
crisis days ($4.4\%$). Four knobs are exposed: crisis \emph{frequency} (the chain's entry
probability), \emph{severity} (the crisis mean and tail index), \emph{duration} (its
self-transition probability), and \emph{detectability} (\texttt{regime\_separation},
which interpolates the per-state means toward a common value). At
\texttt{regime\_separation}$=0$ the states are indistinguishable from returns alone and
HMM balanced accuracy equals chance, $1/3$, by construction; this is the calibration
check in Fig.~\ref{fig:u3}(d). Because the latent state is recorded, regime inference can
be scored directly and the oracle hierarchy of \S\ref{sec:oracle} can be computed
exactly, neither of which is possible on historical data.

\paragraph{Horizon and splits.} Each seed draws a single $2{,}200$-day path. The first
$756$ days ($\approx$3 years) are a \textbf{warm-up} used only to initialise the HMM and to
fit the first generator; no trade is placed inside it. The remaining $1{,}444$ days are
evaluated walk-forward with a rebalance every $21$ trading days, giving \textbf{$69$
evaluated rebalances per seed}. Estimation is strictly causal: at each rebalance the
generator and detector see only data up to that date, and the generator is refit every
$1{,}000$ trading days ($2$ refits per run), the cadence behind every reported number
(Table~\ref{tab:repro}, App.~\ref{app:spec}). The generator, its default
parameter values, and the seed registry are specified in Table~\ref{tab:repro},
Table~\ref{tab:seeds} and App.~\ref{app:spec}; code and configuration files will be released upon acceptance.

\paragraph{Reproducibility summary.} Table~\ref{tab:repro} collects every setting needed
to reproduce the measured results in one place; App.~\ref{app:spec} gives the full
specification.

\begin{table}[tbp]\centering\small
\caption{\textbf{Reproducibility summary} for the measured (U3) experiments. All values are
taken from the frozen preregistration and the implementation; the data-generating process,
architecture and verifier are given in full in App.~\ref{app:spec}.}
\label{tab:repro}
\begin{tabular}{ll}
\toprule
Setting & Value \\
\midrule
\multicolumn{2}{l}{\emph{Benchmark}}\\
Regimes $K$ & $3$ (bull, stagnation, crisis); $K\in\{2,3,4\}$ searched \\
Instruments $d$ & $10$, of which $3$ defensive (negative crisis beta) \\
Transition matrix & Eq.~\eqref{eq:transition}; stationary $(0.669,0.283,0.048)$ \\
Regime persistence & $(0.992,\,0.980,\,0.978)$; mean crisis episode $\approx45$ days \\
Factor drift (annualised) & $(0.20,\,0.01,\,-0.55)$ \\
Factor daily volatility & $(0.0060,\,0.0095,\,0.0210)$ \\
Skew-$t$ degrees of freedom & $(12,\,7,\,3.5)$; skew $(0,\,-0.25,\,-0.80)$ \\
Residual correlation & $(0.10,\,0.15,\,0.30)$; idiosyncratic vol $0.0035$ \\
Crisis prevalence & $4.79\%$ stationary ($71.9$ crisis days per $1{,}500$) \\
Simulated horizon & $2{,}200$ days per seed: $756$-day warm-up, $1{,}444$ evaluated \\
Rebalances & $69$ (monthly, every $21$ trading days); embargo $21$ days \\
\midrule
\multicolumn{2}{l}{\emph{Generator and denoiser}}\\
Formulation & variance-preserving diffusion; predicts $\varepsilon$ \\
Denoiser & two experts, each $45\!\to\!128\!\to\!128\!\to\!10$ with SiLU \\
Gate & $g_t=\sigma(\mathrm{MLP}(\pi_t))$, $3\!\to\!32\!\to\!1$ with SiLU \\
Input dimension & $45 = d\,(10) + $ time embedding $(32) + K\,(3)$ \\
Parameter counts & $23{,}690$ per expert $+\,161$ gate $=47{,}541$ total \\
Diffusion steps $S$ & $50$; swept over $\{20,50,100\}$ \\
Noise schedule & linear $\beta$ from $10^{-4}$ to $0.06$ over $S$ steps \\
Scenarios $N$ & $\{256,512\}$ searched, $512$ default; $4{,}096$ in the sampling intervention \\
Return winsorisation & $|r|\le0.10$ (bounds $M$ in the concentration bound) \\
\midrule
\multicolumn{2}{l}{\emph{Training}}\\
Schedule & $12$ epochs per refit; batch $128$; Adam; lr $2\times10^{-3}$ \\
Refit cadence & every $1{,}000$ trading days ($2$ refits per run) \\
Selection & deflated Sharpe, computed within the walk-forward \\
\midrule
\multicolumn{2}{l}{\emph{Verifier}}\\
Method & adaptive conformal inference \citep{gibbs2021,zaffran2022} \\
Target miscoverage & $\delta=0.10$ (90\% long-run act-time coverage) \\
Budget & $B=3$; per-attempt level $\delta/B\approx0.033$ \\
Score $s_t$ & Definition~\ref{def:score}; $|$realised tail loss $-$ scenario $\CVaR_\alpha|$ \\
Calibration window & $60$ scores (rolling); ACI step size $0.01$ \\
Fallback & stationary block bootstrap into the same CVaR-QP \\
\midrule
\multicolumn{2}{l}{\emph{Decision and evaluation}}\\
Constraints & long-only, fully invested; box $[0,0.25]$; turnover $\le0.20$ per rebalance \\
Costs & $8$\,bps proportional \\
CVaR level $\alpha$ & $0.95$ default; $\alpha\in\{0.90,0.95,0.99\}$ searched \\
Risk-free rate & $rf{=}0$ primary on U3; oracle hierarchy at $rf{=}1.6\%$ (App.~\ref{app:metrics}) \\
Solver & CLARABEL via CVXPY ($\ge$1.4), SCS fallback; no overrides set \\
Solver tolerances & CLARABEL gap/feas.\ tol.\ $10^{-8}$, \texttt{max\_iter}$=200$; \\
 & SCS $\varepsilon_{\mathrm{abs}}{=}\varepsilon_{\mathrm{rel}}{=}10^{-4}$, \texttt{max\_iters}$=10^{5}$ \\
Hyperparameter grid & $2{,}916$ configurations $\times\,5$ seeds $=14{,}580$ (App.~\ref{app:trials}) \\
Ablation arms & $-$verifier, $-$memory, $-$MoE, $-$CVaR term, $B{=}1$ \\
Seed registry & Table~\ref{tab:seeds} \\
\bottomrule
\end{tabular}
\end{table}

\section{Full experimental specification}\label{app:spec}
\paragraph{Regime chain.} The latent chain over $\{$bull, stagnation, crisis$\}$ is built
from per-regime persistences $(0.992,0.980,0.978)$, with the exit mass split
$0.93/0.07$ from bull into stagnation/crisis, $0.88/0.12$ from stagnation into
bull/crisis, and $0.35/0.65$ from crisis into bull/stagnation. This yields
\begin{equation}
P=\begin{pmatrix}
0.99200 & 0.00744 & 0.00056\\
0.01760 & 0.98000 & 0.00240\\
0.00770 & 0.01430 & 0.97800
\end{pmatrix},
\label{eq:transition}
\end{equation}
whose stationary distribution is $(0.669,\,0.283,\,0.048)$. Crisis self-transition
$0.978$ gives a mean crisis episode of about $45$ trading days, deliberately longer than
the $21$-day rebalance interval, since an episode shorter than the decision interval could
not be rotated into by a monthly agent and the benchmark could not test the paper's claim.
The stationary crisis share of $4.79\%$ implies $71.9$ crisis days in a $1{,}500$-day
window; the evaluated path realises $66$ ($4.4\%$), an ordinary finite-path deviation.
Realised and stationary prevalence need not coincide, and it is the realised figure that
determines how much crisis data the generator actually saw.

\paragraph{Return law.} Returns follow a regime-dependent factor model,
$r_i = \mu_i(k) + \beta_i(k)F(k) + \epsilon_i$, where $F(k)$ is a skew-$t$ market factor
with a \emph{shared} chi-square mixing variable, which is what produces lower-tail
\emph{dependence} rather than merely high correlation: assets co-crash. The factor has
annualised drift $(0.20,\,0.01,\,-0.55)$, daily volatility
$(0.0060,\,0.0095,\,0.0210)$, degrees of freedom $(12,\,7,\,3.5)$ and skew
$(0,\,-0.25,\,-0.80)$ across bull, stagnation and crisis; residual cross-correlation among
risky names is $(0.10,\,0.15,\,0.30)$ and idiosyncratic volatility is $0.0035$ daily.
No regime is Gaussian: the tails thicken and the skew steepens monotonically from
bull to crisis. Of the $d{=}10$ instruments, $3$ are defensive, with near-zero normal beta
and \emph{negative} crisis beta (a flight-to-quality corner), and a lower unconditional
carry ($0.8\%$ annual) than the risky names. Both properties are load-bearing: without a
defensive corner the allocator is always fully invested in statistically identical assets,
equal weight is optimal by construction, and regime-conditioning has nothing to earn;
without the carry penalty, holding the defensive assets always would dominate and
regime-timing would again be worthless.

\paragraph{Denoiser.} Each of the two experts is a three-layer MLP
$45\to128\to128\to10$ with SiLU activations, where the $45$-dimensional input concatenates
the noisy return vector ($d{=}10$), a $32$-dimensional sinusoidal time embedding, and the
$K{=}3$ regime posterior. The gate is $3\to32\to1$ with a sigmoid output, combining the
experts as
$\hat\varepsilon_\theta=(1-g_t)\hat\varepsilon^{\mathrm{base}}_\theta+g_t\hat\varepsilon^{\mathrm{crisis}}_\theta$
so the weights sum to one by construction. Exact parameter counts are $23{,}690$ per
expert and $161$ for the gate, $47{,}541$ in total; the $-$MoE ablation drops the crisis
expert, leaving $23{,}851$. Diffusion uses $S{=}50$ steps with $\beta$ linear from $10^{-4}$ to $0.06$. For the
reported baseline and ablation table the generator is trained for $12$ epochs per refit,
batch $128$, Adam at lr $2\times10^{-3}$, and refit every $1{,}000$ trading days, giving
\textbf{two refits per run}. \emph{Two other scripts in the released code, the smoke test
and an early ablation sweep, use a different generator budget ($25$ epochs, refit every
$504$ trading days) and are not the source of any reported number}; the values here are
those behind Table~\ref{tab:perf}, and they are the values used by every measured arm
(\texttt{diff\_epochs=12}, \texttt{retrain\_every=1000}).

\paragraph{Verifier, operationally.} The score of Definition~\ref{def:score} is computed
exactly as Definition~\ref{def:score} states. The verifier keeps a rolling window of the
$60$ most recent scores, takes the empirical $(1-\beta_t)$-quantile of that window as the
certificate width $q_t$, targets miscoverage $\delta{=}0.10$, and updates the running level
online by
$\beta_{t+1}=\beta_t+\gamma_{\mathrm{ACI}}\,(\delta-\mathbf{1}\{s_t>q_t\})$
with step size $\gamma_{\mathrm{ACI}}=0.01$ and $\beta_0=\delta/B$.
Attempts are capped at $B{=}3$ and each is evaluated at $\delta/B\approx0.033$; the first
passing attempt is accepted. If no attempt passes, or if the solve fails, the system falls
back to a stationary block bootstrap whose scenarios are solved by the \emph{same} CVaR-QP.

\paragraph{Why (C5) is not a technicality, and what would discharge it.} The gap in
(C5) is structural rather than a missing lemma. Our score is a single-realisation residual
$|\ell_{t+1}-\widehat{\CVaR}_\alpha|$, and $\CVaR_\alpha$ is \textbf{not elicitable}: no
scoring function has $\CVaR_\alpha$ as its unique minimiser \citep{gneiting2011}, so no
one-sample residual can identify it, and no amount of conformal calibration on such a score
can be made to bound $|\CVaR^{p_\theta}_\alpha-\CVaR^{\Prob}_\alpha|$. This is why we mark
(C5) unproved rather than deferred: as specified, it is not provable.

There is, however, a constructive route, and we state it because it prescribes a concrete
change to the verifier; no reported run is reinterpreted by it. The \emph{pair}
$(\mathrm{VaR}_\alpha,\CVaR_\alpha)$ \textbf{is} jointly elicitable, via the
Fissler--Ziegel class of strictly consistent scoring functions \citep{fissler2016}. A
verifier that scored the model's $(\mathrm{VaR},\CVaR)$ pair under an FZ-consistent loss
would inherit strict consistency, and under the accompanying strong-identifiability
condition an excess-score bound converts into a bound on the population discrepancy,
supplying exactly the bridge (C5) assumes. We did not run this variant, so we register it
as the repair our own diagnosis implies, in the same spirit as \S\ref{sec:attrib}:
the protocol convicts the paper's theory as well as its generator.

\paragraph{What $s_t$ is and is not.} $s_t$ is an \emph{observable} absolute one-step discrepancy
between a scenario CVaR estimate and a single realised loss. It is not an estimate of the
population quantity $|\CVaR^{p_\theta}_\alpha-\CVaR^{\Prob}_\alpha|$, and a single realised
return cannot reveal that quantity. The link between the calibrated quantiles of
$\{s_\tau\}$ and the population discrepancy is exactly the identification condition that
the conditional certificate statement~\ref{thm:certified} assumes and that we do not prove (App.~\ref{app:proofs}).

\paragraph{Allocator and solver.} The convex program \eqref{eq:qp} is solved through CVXPY
with \textbf{CLARABEL} as the primary interior-point solver and \textbf{SCS} as the
fallback, invoked only if the primary solve raises. No tolerance, iteration or
polish setting is overridden anywhere in the code, so each solver's compiled-in defaults
apply. We state them here so the PDF is self-contained: CLARABEL uses absolute and
relative duality-gap tolerances and a feasibility tolerance of $10^{-8}$ with a
$200$-iteration limit; SCS uses $\varepsilon_{\mathrm{abs}}=\varepsilon_{\mathrm{rel}}=
10^{-4}$ with a $10^{5}$-iteration limit. The released dependency specification pins
CVXPY $\ge1.4$, the first release in which CLARABEL is the default conic solver, and
CLARABEL and SCS enter as transitive dependencies of that pin. The solver reports one of three outcomes per
rebalance: a normal solve, a fallback solve, or, if no weight vector is returned,
status \texttt{FAILED}, in which case the committed action is the previous feasible
portfolio $w_{t-1}$. \texttt{FAILED} did not occur in any reported run. Returns are
winsorised at $|r|\le0.10$, which also bounds $M$ in the concentration term of
Lemma~\ref{a-lem:conc}.

\paragraph{Post-solve cleanup: what it does and does not guarantee.} The solver's solution
$w^{\mathrm{sol}}$ is clipped to the box and renormalised to sum to one before it is
committed. This is numerical cleanup and a \emph{no-op at an exact solution}, but it is
post-processing, and we state its guarantee precisely because an earlier draft overstated
it. Clipping alone gives $\ell\le w_i\le u$. The subsequent division by $\sum_i w_i$
restores the budget exactly and, with $\ell{=}0$ as here, preserves non-negativity, but it
is \textbf{not} feasibility-preserving in general. If the clipped vector sums to $s<1$ every
coordinate is scaled up by $1/s$, which can lift one above $u$; for $\ell>0$ and $s>1$ the
symmetric violation is possible at the lower bound; and rescaling changes
$\|w_t-w_{t-1}\|_1$, so the turnover cap can be breached too. Nor do we claim a general
perturbation bound. Renormalisation rescales \emph{every} coordinate by $1/s$, so the
displacement it induces grows with the dimension and with how close the active coordinates
sit to their bounds; bounding it would need assumptions we have not stated. What we rely on
is that the solver residuals are numerically small, so
the post-processing perturbation was small in the reported runs, and, decisively, on the
explicit audit. No theoretical guarantee is doing any work here. The paper's feasibility
claim is therefore checked, not constructed: the ``Infeas.'' column of Table~\ref{tab:adv} is a post-hoc audit
of every committed weight vector against the budget ($10^{-4}$), box ($10^{-6}$) and
turnover ($10^{-4}$) tolerances, and it is zero for every executed arm and seed. That is
evidence, not a guarantee. ``Verifiable provenance'' likewise means the solve is
reconstructible from the log, not that a certificate has been re-derived for a
post-processed vector: the logged KKT duals certify $w^{\mathrm{sol}}$, not the
clipped iterate.

\paragraph{The variant that would restore the guarantee (registered, not run).} The clean
design does not post-process at all. The trusted validator should check the budget, box and
turnover residuals of $w^{\mathrm{sol}}$ against fixed tolerances and commit
$w^{\mathrm{sol}}$ only if all three pass, invoking the predefined block-bootstrap fallback
otherwise. That variant is better on two counts: every committed action is feasible to a
stated tolerance by a check the \emph{validator} owns rather than by a numerical
convenience, and the committed vector is the solver's own iterate, so the logged KKT record
certifies the action actually taken and the $w^{\mathrm{sol}}$-versus-$w_t$ distinction of
Prop.~\ref{prop:feas} disappears. A Euclidean projection onto the full constraint set would
also restore feasibility, but it is a second optimisation whose output carries no KKT record
from the first, so it reintroduces the gap it closes. We ran neither: every number in this
paper comes from the clip-and-renormalise pipeline described above, and we register the
change rather than retrofit it, exactly as we do for the (C5) scoring change above.

\paragraph{Action-space partition (detail for \S\ref{sec:threat}).} The \emph{untrusted},
controller-settable parameters are the objective and generator parameters
$(K, N, \eta, q, \lambda, \gamma, \lambda_\mu, \alpha)$; none of them can enlarge the feasible
set, so a corrupted controller can degrade decision \emph{quality} but not admissibility. The
\emph{trusted}, validator-owned parameters are the constraint parameters $(\ell,u,\tau)$ and the
refinement budget $B$. The validator clips any untrusted value to its preregistered range and
rejects any attempt to name a constraint parameter.

\paragraph{Controller.} The controller is a \textbf{transparent rule plus a contextual
bandit} (LinUCB-lite) over the preregistered grid; no LLM is involved. The rule keys off
the crisis posterior $\pi_{t,\mathrm{crisis}}$: above $0.35$ it sets
$\eta{=}3.0$, $\alpha{=}0.99$, $\gamma{=}1.5$, $q{=}0.10$ and $\lambda_\mu{=}4$ (risk-off);
between $0.15$ and $0.35$ it sets $\eta{=}2.0$, $\alpha{=}0.95$, $\lambda_\mu{=}12$; below
$0.15$ it sets $\lambda_\mu{=}24$ (risk-on). The bandit then refines
$\{\eta,\gamma,\lambda,\lambda_\mu\}$ within the grid, keeping per-$(\text{regime},
\text{knob}, \text{value})$ reward means and exploring uniformly with probability
$\varepsilon=0.15$. Every value it can emit is inside the preregistered ranges, and the
validator clips anything outside them.

\paragraph{Regime detector.} A Gaussian HMM (\texttt{hmmlearn}) with $K$ states on
$z$-scored features, \texttt{covariance\_type="diag"}, \texttt{n\_iter}$=200$,
\texttt{tol}$=10^{-4}$, and \textbf{4 random restarts} scored by log-likelihood with the
best retained; restart seeds are $\text{seed}+13r$. The offline regime-scoring used for the
detectability calibration of Fig.~\ref{fig:u3}(d) instead uses
\texttt{covariance\_type="full"} with \texttt{tol}$=10^{-3}$ and the same restart count.

\paragraph{Baseline tuning.} The scenario-fed baselines (historical CVaR, DRO-CVaR,
Gaussian, stationary block bootstrap) are \emph{untuned by construction}: they feed the
same CVaR-QP at the same $\alpha$, box, turnover cap and costs, so only scenario quality
differs. The classical baselines take their standard forms with no free parameters beyond
the shared constraints, except HRP, which uses the standard single-linkage
correlation-distance clustering, and the block bootstrap, whose block length is $21$ days
(one rebalance interval).

\paragraph{Attacks, operationally.} A1 randomises $\lambda_\mu$ and $\gamma$ within
$[0,30]$; A2 flips the HMM regime posterior; A3 bypasses the conformal gate (it does not touch the validator); A4 maximises
turnover. None of the four modifies the trusted validator, which is why the feasibility
result they support excludes validator compromise (A5, not executed).

\begin{figure}[!tb]
\centering
\includegraphics[width=\linewidth]{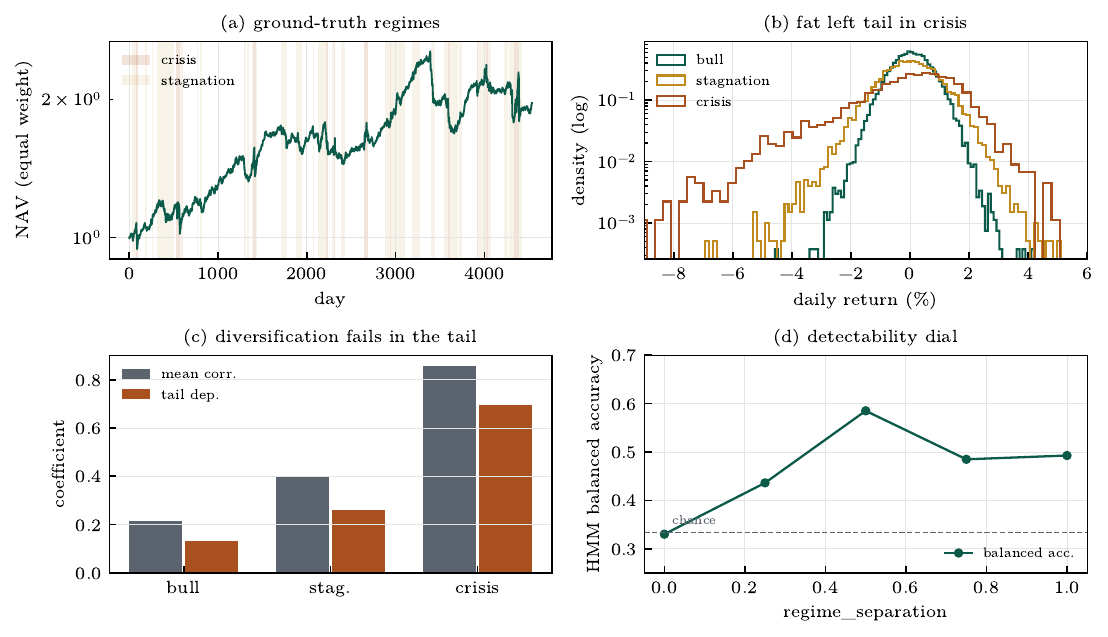}
\caption{The U3 benchmark, drawn on an \textbf{illustrative $4{,}536$-day path} from the
same generator (panel (d) uses $2{,}600$-day paths per seed); this is for visualising the
data-generating process and is \emph{not} the evaluated horizon, which is $2{,}200$ days per seed
($756$ warm-up, $1{,}444$ evaluated). \textbf{(a)} ground-truth regimes; \textbf{(b)} crisis
returns are fat-tailed and left-skewed; \textbf{(c)} correlations \emph{and}
lower-tail dependence both spike in crisis, so diversification fails exactly when it
is needed; \textbf{(d)} the detectability dial: at \texttt{regime\_separation}$=0$
balanced accuracy equals chance ($1/3$) by construction.}
\label{fig:u3}
\end{figure}

\section{Problem setup, tools, and the agent loop}\label{app:setup}
At rebalance date $t$, let $R_t \in \R^d$ be observed returns and
$\wt \in \R^d$ portfolio weights satisfying the budget, box, and turnover
constraints
\begin{equation}
\mathbf{1}^\top \wt = 1,\qquad \ell \le \wt \le u,\qquad
\|\wt - w_{t-1}\|_1 \le \tau .
\label{eq:feasset}
\end{equation}
Regimes follow a $K$-state Gaussian HMM with latent state $S_t$ and posterior
$\pi_{t,k} = \Prob(S_t{=}k \mid R_{1:t})$. All estimation is strict
walk-forward. \arc orchestrates three specialist tools, treated as fixed, trusted
backbones; the controller and evidence-filter layer (\S\ref{sec:bench}) sits around them.

\paragraph{Tool A: Regime detector.} Fit/update the HMM on $\{R_s\}_{s\le t}$ and
emit $\pi_t$ and a context vector $z_t$ encoding $\arg\max_k \pi_{t,k}$ and recent
statistics.

\paragraph{Tool B: Scenario generator.} A variance-preserving,
regime-conditioned diffusion model with a tail-weighted objective. With
$\tilde{\ell} = -\min_j r_j$ a portfolio-free adverse proxy,
\begin{equation}
\Ltail = \E\!\Big[\big(1 + \eta\, \mathbf{1}\{\tilde\ell \ge Q_{1-q}(\tilde\ell)\}\big)
\,\|\varepsilon - \varepsilon_\theta(\cdot)\|_2^2\Big],
\quad q\in[0.05,0.10],\ \eta\in[1,3],
\label{eq:ltail}
\end{equation}
The denoiser is a two-expert (mixture-of-experts) network,
$\hat\varepsilon_\theta = (1-g_t)\hat\varepsilon^{\mathrm{base}}_\theta
+ g_t\hat\varepsilon^{\mathrm{crisis}}_\theta$ whose gate
$g_t = \sigma(\mathrm{MLP}(z_t))$ increases with the crisis posterior. At
deployment it draws $N$ scenarios $\{r^{(i)}_{t+1}\}$ conditioned on $z_t$.

\textbf{Note on the indicator (corrected).} An earlier draft printed
Eq.~\eqref{eq:ltail} with $\mathbf{1}\{\tilde\ell \le Q_q(\tilde\ell)\}$, which
up-weights the \emph{least} adverse samples, the opposite of the intent. The
\emph{implementation} was always correct; the equation was not. We verified this rather
than asserting it: the up-weighted set has mean return $-107$\,bps and is $2.9\times$
enriched in crisis days, whereas the erroneous indicator would have selected samples
averaging $+106$\,bps. The audit script will be released upon acceptance.
The $29\%$
mean-collapse result is therefore not an artefact of an inverted indicator, and $\eta$
is ruled out independently (\S\ref{sec:mech}).

\paragraph{Tool C: CVaR allocator.} With per-scenario loss
$\ell_i = -w^\top r^{(i)}_{t+1}$ and blended, shrunk moments
$(\hmu_t,\hSig_t)$, solve the convex epigraph QP
\begin{align}
\min_{w,\zeta,\{s_i\}}\ & -\lambda_\mu\, \hmu_t^\top w
+ \gamma\, w^\top \hSig_t w
+ \zeta + \frac{1}{(1-\alpha)N}\sum_{i=1}^{N} s_i \label{eq:qp}\\
\text{s.t.}\ & s_i \ge 0,\quad s_i \ge \ell_i - \zeta,\quad i=1,\dots,N, \nonumber\\
& \mathbf{1}^\top w = 1,\quad \ell \le w \le u,\quad \|w - w_{t-1}\|_1 \le \tau. \nonumber
\end{align}
The objective is a convex QP (quadratic $w^\top\hSig_t w$ plus a CVaR epigraph
with $O(N)$ affine constraints); interior-point solves scale as
$O(d^3 + N d^2)$. We use the Rockafellar--Uryasev CVaR at level $\alpha{=}0.95$
and shrink $\hSig_t$ toward the identity \citep{ledoit2004}. The KKT system logs
active constraints, tail weights, and duals at each rebalance, the substrate for
governance in \S\ref{sec:threat}.

\paragraph{The agent loop.} Algorithm~\ref{alg:loop} states the plan--act--verify--refine
loop of \S\ref{sec:bench} in full. Configurations are code-as-action: the controller may
set objective and generator parameters, never weights and never constraint parameters,
which the trusted validator owns (\S\ref{sec:threat}).

\begin{algorithm}[t]
\caption{\arc: one rebalance at date $t$}
\label{alg:loop}
\begin{algorithmic}[1]
\State \textbf{input:} history $R_{1:t}$, previous weights $w_{t-1}$, refinement budget $B$
\State $\pi_t, z_t \gets \textsc{RegimeDetector}(R_{1:t})$ \Comment{Tool A; strict walk-forward}
\State $a_t \gets \textsc{Controller}(z_t, \text{memory})$ \Comment{config only; validator clips to preregistered ranges}
\State $b \gets 0$
\Repeat
\State $\{r^{(i)}_{t+1}\}_{i=1}^{N} \gets \textsc{Generator}(z_t; a_t)$ \Comment{Tool B}
\State $\text{pass} \gets \textsc{Verifier}(\{r^{(i)}_{t+1}\}, \{s_\tau\}_{\tau<t}, \delta/B)$ \Comment{conformal gate, Prop.~\ref{prop:union}}
\State $b \gets b + 1$; \textbf{if} $\neg\text{pass}$ \textbf{then} $a_t \gets \textsc{Refine}(a_t)$
\Until{$\text{pass}$ \textbf{or} $b = B$}
\If{$\neg\text{pass}$}
\State $\{r^{(i)}_{t+1}\} \gets \textsc{StationaryBlockBootstrap}(R_{1:t})$ \Comment{fallback}
\EndIf
\State $(\hmu_t, \hSig_t) \gets$ blended, shrunk moments of $\{r^{(i)}_{t+1}\}$
\State $w_t \gets \arg\min$ of the convex QP~\eqref{eq:qp} subject to~\eqref{eq:feasset} \Comment{Tool C}
\State $s_t \gets$ realized calibration score once $r_{t+1}$ is observed
\State \textbf{log} $(\pi_t, a_t, \text{verifier outcome}, \text{active set}, \text{KKT duals})$
\State \textbf{return} $w_t$ \Comment{feasible by Prop.~\ref{prop:feas}; KKT record is for $w^{\mathrm{sol}}$}
\end{algorithmic}
\end{algorithm}

\section{Full evaluation protocol and metric registry}\label{app:metrics}
We evaluate \arc under a deliberately broad protocol built to stress
nonstationarity across universes and windows; the protocol and synthetic benchmark will be
released as a reusable testbed upon acceptance.

\paragraph{What we run, and what we only register.} We report measured results on
\textbf{U3 only}. U3 is a controllable synthetic
regime-shift benchmark with \emph{ground-truth latent regimes} and tunable crisis
frequency, severity, duration, and detectability. Among the testbeds available to us it
is the only contamination-free one: no pretrained model can have memorised its regimes,
and the true state is known, so regime inference can be scored directly and an \emph{oracle
bound} can be computed exactly (\S\ref{sec:oracle}). Neither property is available on
historical data.

Two historical universes are \textbf{preregistered as protocol and deliberately not
run}: \emph{(U1)} a point-in-time cross-asset ETF panel respecting inception dates
(TIP 2003, EEM 2003, commodity ETFs 2006+, VXX 2009; a $\sim$40-instrument
diversified panel does not exist before the mid-2000s), and \emph{(U2)} a large-cap
equity panel from point-in-time index membership \emph{including delisted names}. We did not
execute U1 or U2 in this study, preserving them for a separately preregistered evaluation with
independently frozen universe-construction and data-cleaning rules, and no empirical claim in
this paper is based on either. App.~\ref{app:universe} records the frozen U1/U2 protocol for
transparency.

\paragraph{Horizon and walk-forward (U3, measured).} Daily simulated returns over
$2{,}200$-day paths, a $756$-day warm-up, monthly rebalancing, and $69$ evaluated
rebalances per run.
Estimation is strict, purged, and embargoed walk-forward \citep{lopezdeprado2018}; the expanding
walk-forward yields \emph{multiple} OOS sub-periods. Per-instrument transaction costs of $8$\,bps and an
$\ell_1$ turnover cap $\|\wt-w_{t-1}\|_1\le 0.20$ \textbf{per rebalance} are applied
identically to all methods. Because U3 is synthetic there is no risk-free series and no
calendar: Sharpe is reported on excess return with $rf{=}0$, and the $rf$ sensitivity of
the headline contrast is reported in App.~\ref{app:lattice}.

\paragraph{Horizon and walk-forward (U1/U2, registered, not run).} Were the historical
universes executed, they would use daily total returns over \textbf{2007--2025} (the
earliest date at which U1 exists as a diversified panel), the same purged and embargoed
expanding walk-forward, monthly rebalancing, and three predeclared crisis windows
(2008--09, the 2020 COVID crash, and the 2022 inflation shock) as stress slices, with
Sharpe taken net of the 3-month T-bill ($rf{\approx}1.6\%$ over that span). \emph{None of
these settings bears on any number reported in this paper}; they are recorded so that a
future execution is bound by them.

\paragraph{Baselines (eleven, all implemented and reported).} \emph{Classical} (five):
equal weight, risk parity, minimum variance, mean--variance \citep{markowitz1952},
hierarchical risk parity \citep{lopezdeprado2016hrp}. \emph{Robust / scenario} (four)
$\to$ the \emph{same} CVaR-QP, so only scenario quality differs: historical CVaR,
DRO-CVaR, a Gaussian scenario model, and a stationary block bootstrap. \emph{Agentic
controls} (two): a rule-based weights-emitting agent with no symbolic solve, and the same
agent with a projection onto the feasible set. Neither is a language-model agent; no LLM
was executed in this work. All allocator-fed methods share \arc's box, turnover cap and
costs. Results appear in Table~\ref{tab:perf} (four), Table~\ref{tab:adv} (two) and
Table~\ref{tab:restbase} (five).

We do not report TimeGAN \citep{yoon2019timegan}, TimeVAE
\citep{desai2021timevae}, TSDiff \citep{kollovieh2023tsdiff}, DeepAR
\citep{salinas2020deepar}, or TFT \citep{lim2021}. Each is a substantial model, and a
half-trained instance would not be a fair comparator. Reporting numbers that were not
measured would violate the reporting discipline of this paper; we state the omission
rather than fill the row.

\paragraph{Method registry.} Table~\ref{tab:registry} lists \emph{every} method,
ablation, oracle and control that appears anywhere in this paper, with the single place
each is reported. It exists so that the counts in the text can be audited against the
tables: \textbf{eleven baselines} (five classical, four scenario-fed, two rule-based
controls), one diagnostic comparator, three \arc\ arms, and the oracle and intervention
rows. The two control arms are counted once, as baselines; they reappear as
comparison rows in Table~\ref{tab:adv} rather than as additional methods.

\begin{table}[tbp]\centering\small
\caption{\textbf{Method registry.} Every method in the study, its family, whether it is
fed by the shared CVaR-QP, and where it is reported. ``Not reported'' means implemented but
\textbf{not executed} for this version, so no outcome has been observed; nothing here was
run and then withheld.}
\label{tab:registry}
\begin{tabular}{llcc}
\toprule
Method & Family & CVaR-QP & Reported in \\
\midrule
Equal weight              & baseline (classical) & n/a & Tab.~\ref{tab:perf} \\
Minimum variance          & baseline (classical) & n/a & Tab.~\ref{tab:perf} \\
Risk parity               & baseline (classical) & n/a & Tab.~\ref{tab:restbase} \\
Mean--variance            & baseline (classical) & n/a & Tab.~\ref{tab:restbase} \\
Hierarchical risk parity  & baseline (classical) & n/a & Tab.~\ref{tab:restbase} \\
Historical CVaR           & baseline (scenario)  & \checkmark & Tab.~\ref{tab:restbase} \\
DRO-CVaR                  & baseline (scenario)  & \checkmark & Tab.~\ref{tab:restbase} \\
Gaussian scenario model   & baseline (scenario)  & \checkmark & Tab.~\ref{tab:perf} \\
Stationary block bootstrap& baseline (scenario)  & \checkmark & Tab.~\ref{tab:perf} \\
Weights-emitting control  & baseline (rule-based control) & n/a & Tab.~\ref{tab:adv} \\
\quad $+$ projection      & baseline (rule-based control) & n/a & Tab.~\ref{tab:adv} \\
\midrule
Regime-cond.\ bootstrap   & diagnostic comparator & \checkmark & Tab.~\ref{tab:perf} \\
\arc\ (full)              & proposed & \checkmark & Tab.~\ref{tab:perf}, \ref{tab:adv} \\
\quad $-$ verifier        & ablation & \checkmark & Tab.~\ref{tab:perf} \\
\quad $-$ CVaR term       & ablation & \checkmark & Tab.~\ref{tab:perf} \\
\quad $-$ memory          & ablation & \checkmark & not reported \\
\quad $-$ MoE (single expert) & ablation & \checkmark & not reported \\
\quad $B{=}1$ (no refinement) & ablation & \checkmark & not reported \\
\midrule
L0--L3 oracle hierarchy   & oracle (4 rows) & \checkmark & Tab.~\ref{tab:oracle} \\
I0--I2 interventions      & intervention (3 rows) & \checkmark & Tab.~\ref{tab:intervene} \\
$2^3$ factorial arms      & intervention (8 cells) & \checkmark & Tab.~\ref{tab:factorial} \\
\bottomrule
\end{tabular}
\end{table}

\paragraph{Seed and uncertainty registry.} Seed counts differ across tables because the
experiments were run at different stages of the project, and we report that rather than
harmonise it retrospectively. Table~\ref{tab:seeds} states, for every table and figure,
the number of seeds, whether the comparison is paired under common random numbers, and
what the reported $\pm$ means. We label every experiment \textbf{confirmatory} (paired, common random numbers, $10$--$24$
seeds) or \textbf{exploratory} ($5$--$6$ seeds), and we do not claim that every headline
result is confirmatory: the oracle hierarchy and the stage interventions are, while the
main baseline table, the red-team table and the $2^3$ factorial, including the
interaction estimate, are exploratory and should be read as such.

\begin{table}[tbp]\centering\small
\caption{\textbf{Seed and uncertainty registry.} ``Paired'' means the arms share return
paths under common random numbers, so the reported interval is on the contrast rather
than on the level. Confidence intervals on paired contrasts are $95\%$ percentile
intervals from a bootstrap over \emph{seeds} ($R=10{,}000$ replications); because time is
paired under common random numbers, resampling is over seeds, not over time. The
stationary block bootstrap (block length $21$ days) is a separate procedure applied to
\emph{time-series} Sharpe and CVaR differences (App.~\ref{app:metrics}), and the two should not
be conflated. The last
column of the table states which experiments are confirmatory and which are exploratory.}
\label{tab:seeds}
\begin{tabular}{llccll}
\toprule
Where & Quantity & Seeds & Paired & $\pm$ denotes & Status \\
\midrule
Tab.~\ref{tab:perf}      & Sharpe by method        & 5     & no  & across-seed s.d. & exploratory \\
Tab.~\ref{tab:intervene} & Sharpe by intervention  & 10    & yes & standard error & confirmatory \\
Tab.~\ref{tab:factorial} & $\Delta$Sharpe, channels& 6     & yes & $95\%$ CI & exploratory \\
Tab.~\ref{tab:oracle}    & oracle Sharpe           & 24    & yes & point estimates & confirmatory \\
Tab.~\ref{tab:adv}       & Sharpe under attack     & 5     & no  & across-seed s.d. & exploratory \\
Tab.~\ref{tab:restbase}  & remaining baselines     & 5     & no  & across-seed s.d. & exploratory \\
Fig.~\ref{fig:results}(a)& oracle frontier         & 24    & yes & $\pm1$ s.e. & confirmatory \\
Fig.~\ref{fig:results}(b)& stage interventions     & 10--16& yes & $95\%$ CI & confirmatory \\
Fig.~\ref{fig:results}(c)& factorial               & 6     & yes & $95\%$ CI & exploratory \\
Fig.~\ref{fig:oracle}    & oracle frontier         & 24    & yes & $\pm1$ s.e. & confirmatory \\
\S\ref{sec:bottleneck}   & perception intervention & 16    & yes & $95\%$ CI & confirmatory \\
\S\ref{sec:oracle}       & L1, L2 paired headroom  & 24    & yes & $95\%$ CI & confirmatory \\
\S\ref{sec:oracle}, App.~\ref{app:lattice} & L2 headroom, $rf$ sweep & 12 & yes & $95\%$ CI & confirmatory \\
\S\ref{sec:mech}         & repaired-generator Sharpe & 6   & yes & $95\%$ CI & exploratory \\
\bottomrule
\end{tabular}
\end{table}

\paragraph{Significance.} We assess Sharpe and CVaR differences with a stationary
block bootstrap ($R{=}10^{4}$ replications; we reserve $B$ for the refinement
budget), Diebold--Mariano tests \citep{diebold1995} on per-period loss
differentials, and the \emph{deflated Sharpe ratio} \citep{lopezdeprado2018} to
correct for selection bias. \textbf{Trial counting}: the effective number of
trials is not the handful of rows we report but the full preregistered search: the
configuration grid ($K$, $N$, $\eta$, $q$, $\lambda$, $\gamma$, $\alpha$, $\lambda_\mu$)
crossed with seeds, giving the $14{,}580$ enumerated in App.~\ref{app:trials}. Ablations
and baselines are \emph{excluded} from that count because they do not search the grid. Deflating against the reported rows
would understate the multiplicity; we deflate against the enumerated count and report that a
substantial part of our nominal advantage does not survive it.

\paragraph{Metrics.}
\emph{Portfolio}: CAGR, volatility, \textbf{Sharpe net of the risk-free rate}
(on U1/U2 the 3-month T-bill; on U3, $rf{=}0$; reporting excess-return Sharpe matters
because a positive $rf$ deflates all Sharpes and differentially flatters low-volatility
strategies, i.e.\ ours, which is why App.~\ref{app:lattice} reports the headline contrast
at $rf{=}0$, $rf{=}1.6\%$ and $rf{=}3\%$), Sortino (downside deviation below zero
excess return), MaxDD, Calmar, tail ratio, and \textbf{turnover reported as mean
per-rebalance $\|\Delta w\|_1$} (not annualised), so it is directly comparable to
the $0.20$ cap. \emph{Scenario calibration}: KS, energy score (ES), variogram score
(VS), CRPS \citep{gneiting2007}, VaR unconditional and conditional coverage
\citep{kupiec1995,christoffersen1998}, and CVaR error (bps), all reported as
\textbf{confidence bounds at the corrected level} of Prop.~\ref{prop:union}, not as
$p$-values (\S\ref{sec:theory}).
\emph{Agentic-value}, defined precisely:
\begin{itemize}[leftmargin=1.4em,itemsep=0pt,topsep=2pt]
\item \textbf{Rejection rate}: fraction of rebalances where the first scenario set
fails the certificate.
\item \textbf{Catch rate}: fraction where refinement was triggered \emph{and}
subsequently improved calibration. These are distinct quantities and we report
both; conflating them (reporting the rejection rate under the name ``catch rate'')
would overstate the verifier's usefulness.
\item \textbf{Re-planning regret}: $\frac{1}{T}\sum_t [\,F_t(w_t^{\text{agent}}) -
F_t(w_t^{\text{oracle-config}})\,]$, the mean QP-objective gap between the
configuration the controller chose and the best configuration in the grid in
hindsight, in objective units. Negative means the controller beat the
best \emph{fixed} configuration by adapting.
\item \textbf{Tool-call budget}: mean tool invocations per rebalance.
\end{itemize}
We \textbf{do not} report \emph{solver-record availability} (elsewhere loosely
``provenance completeness'') as a headline metric. It records whether the convex solve
emitted a complete active-set and KKT record for $w^{\mathrm{sol}}$, not that the committed
$w_t$ is KKT-optimal, and it is
$100\%$ by construction for any solver-backed agent and $0\%$ for any that is not,
so it carries no information about quality (indeed the $-$verifier ablation attains
$100\%$; see \S\ref{sec:adversarial}). We report it once, as a property, not as a
result.

\paragraph{\arc\ variants and ablations.} All executed variants use the
\textbf{rule-plus-contextual-bandit controller}; \emph{no LLM controller variant was run},
so no rule-versus-LLM comparison appears anywhere in this paper. The preregistered
ablation family has five arms, and we state the status of each once, here:
$-$\textbf{verifier} (accept the first scenario set) and $-$\textbf{CVaR term} are
\emph{executed and reported} in Table~\ref{tab:perf}; $-$\textbf{memory},
$-$\textbf{MoE} (single expert) and $B{=}1$ (no refinement) are implemented and runnable but
were not executed for this version, so no outcome for them has been observed by
us or withheld. We record this as a disclosed deviation from the analysis plan: it registered
five ablation arms and we report two.
No outcome is claimed for the three unexecuted arms anywhere in the paper. This list
supersedes any partial enumeration elsewhere.

\section{Cost, latency, and failure modes}\label{app:cost}
A recurring objection to modular generative pipelines is compute. We quantify it rather than
concede it.

\paragraph{Measured cost.} The pipeline records wall-clock time per rebalance. Each
rebalance performs an HMM posterior update, $N{=}512$ diffusion samples, verification, and
one convex solve; diffusion sampling dominates. The generator is refit every
$1{,}000$ trading days, so a run performs $2$ refits of $12$ epochs each rather than one per
rebalance (Table~\ref{tab:repro}; App.~\ref{app:spec} notes that other scripts in the
released code use different generator budgets). The entire measured protocol (all \emph{executed} baselines and ablations at five
seeds, the red-team suite, the oracle sweep and the figures) reproduces in approximately
$25$ minutes on a single CPU core, which is the measured compute cost of this study; no GPU
is required. A monthly
rebalancing interval is $\approx$$2.6\times10^{6}$\,s, so the solve occupies
\[
\frac{0.55\,\mathrm{s}}{2.6\times10^{6}\,\mathrm{s}} \;\approx\; 2\times10^{-7}
\]
of the decision interval, about \emph{one part in five million}, not the
$\sim$$10^{-5}$ an earlier draft reported. Inference is therefore not compute-bound for
the cadence it targets; training, which is excluded from this figure, is the real cost.

\paragraph{Evaluation length.} Each U3 run uses a $756$-day warm-up followed by $1{,}444$
evaluated trading days, rebalanced monthly ($21$ trading days), giving $69$ rebalances per
run. The $2007$--$2025$
calendar, the historical risk-free rate, and the $2008$--$09$/$2020$/$2022$ crisis
windows belong to the \emph{registered, unexecuted} U1/U2 protocol
(App.~\ref{app:metrics}) and bear on no number in this paper.

\paragraph{Inference is negligible; refitting is the dominant computational cost.} Within a
run, the per-rebalance work (sampling, verification, solve) is small relative to the two
generator refits.
The $14{,}580$ figure of App.~\ref{app:trials} is an enumerated multiplicity count
used to deflate Sharpe; it is not a count of executed runs, and we do not claim to have
trained $14{,}580$ generators. What we actually executed is the set recorded in
Table~\ref{tab:seeds}: twelve method arms at five seeds for Table~\ref{tab:perf}, seven
arms at five seeds for the red-team suite, and the oracle and intervention sweeps, which
require no generator at all. Deflating against the full preregistered grid is deliberately
conservative, and is preferred here to deflating against a count of runs that happened to
finish. We report no LLM latency or token figures, because
no LLM was executed (App.~\ref{app:limits}). The seed counts we report are a choice we
should have made differently; compute was never the constraint.
As Table~\ref{tab:seeds} records, they still range from $5$ to $24$: the oracle
hierarchy is paired over $24$ seeds and the stage interventions over $10$--$16$, but the
main baseline table (Table~\ref{tab:perf}), the red-team table (Table~\ref{tab:adv}) and
the factorial (Table~\ref{tab:factorial}) rest on $5$, $5$ and $6$ seeds respectively.
We treat the $24$- and $10$--$16$-seed paired contrasts as confirmatory and the
$5$- and $6$-seed tables as exploratory, and every claim drawn from the latter is labelled
accordingly rather than argued around.

\section{Universe construction (registered, not run)}\label{app:universe}
U1 would be built point-in-time: an instrument enters only after its inception date
and only if it meets a liquidity floor at $t$, so the panel \emph{grows} from
$\sim$14 instruments (2007) to $\sim$40 (2015 onward), and $d$ is time-varying. U2
would use point-in-time index membership including subsequently delisted names.
\textbf{Neither was run.} We register the construction so that a future execution is
bound by it, and so that our own claims cannot quietly expand to cover data we never
touched.

\section{Enumerated trial count}\label{app:trials}
Deflating a Sharpe ratio requires the number of \emph{trials}, not the number of
reported rows. Our frozen preregistration enumerates the grid $K\in\{2,3,4\}$, $N\in\{256,512\}$,
$\eta\in\{1,2,3\}$, $q\in\{0.05,0.10\}$, $\lambda\in\{0.3,0.5,0.7\}$,
$\gamma\in\{0.5,1.0,1.5\}$, $\alpha\in\{0.90,0.95,0.99\}$ and
$\lambda_\mu\in\{4,12,24\}$. The Cartesian product is
$3\cdot2\cdot3\cdot2\cdot3\cdot3\cdot3\cdot3=\mathbf{2{,}916}$ configurations, and with the
five registered seeds this is $\mathbf{14{,}580}$ grid-by-seed combinations, the count
passed to the deflated-Sharpe calculation. This is an enumerated multiplicity
count, not a tally of executed runs, and it is used in that sense everywhere in the paper. No configuration is pruned. The preregistration also records an upper bound that treats
each \emph{combination} of ablation arm and controller arm as a separate search: five
ablation arms $\times$ two controller arms $=10$ families, giving
$10\times14{,}580=145{,}800$. We deflate against the core $14{,}580$, not against
that bound, because ablations and baselines do not search this grid; the upper bound is
recorded only so that a reader can see the most adverse reading. Deflating against the eight
rows of Table~\ref{tab:perf} would understate selection bias by three orders of magnitude, and
is a common and consequential error. At this trial count none of the performance
margins in Table~\ref{tab:perf} survive deflation, which is consistent with, and
reinforces, our headline negative result.

\paragraph{Preregistration amendment (disclosed).} One amendment (PR1) was recorded
after freezing and before any hypothesis outcome was inspected: the range of
$\lambda_\mu$ was widened from $[0,2]$ to $[0,30]$. Reason: at a daily horizon the
CVaR term ($\sim$$10^{-2}$) is $\sim$33$\times$ the return term ($\sim$$3\times10^{-4}$),
so the original bound made the return/risk trade-off \emph{inexpressible}: the
objective degenerated to pure CVaR minimisation and the agent performed no regime
rotation at all. This is a specification bug; the preregistration was re-frozen with a new
hash. The amendment, the hashes and the audit
scripts will be released upon acceptance.

\section{Attribution: full coalition table}\label{app:attrib}
\begin{table}[tbp]\centering\small
\caption{\textbf{Interventional attribution, measured.} Shapley values and Shapley--Taylor
interaction indices computed from the complete $2^3$ coalition table
(Table~\ref{tab:factorial}; $6$ seeds, paired, $95\%$ CI by seed bootstrap).
\textbf{The two blocks are alternative attribution summaries, not additive pieces of one
decomposition}: block A already sums to $v(N)=+0.218$ exactly, as efficiency requires, and
block B is a further diagnostic of non-additivity that must not be added on top of it.
Covariance is inert with tight intervals throughout. The mean$\times$scenario interaction is
$55\%$ of $v(N)$ as a point estimate and the \emph{total} pairwise interaction share is
$54\%$; both are underpowered at six seeds. These are descriptive attribution summaries:
neither the interaction magnitude nor the conditional contribution of the mean channel is
established by them.}
\label{tab:shapley}
\begin{tabular}{lccc}
\toprule
 & $\phi$ & $95\%$ CI & share of $v(N)$ \\
\midrule
\multicolumn{4}{l}{\emph{A. Component credit} (Shapley values; sum to $v(N)$)} \\
\midrule
conditional mean $\hat\mu$        & $+0.080$ & $\pm0.178$ & $36.8\%$ \\
scenario tail/dependence          & $+0.139$ & $\pm0.086$ & $63.8\%$ \\
covariance $\hat\Sigma$           & $-0.001$ & $\pm0.002$ & $-0.6\%$ \\
\midrule
\multicolumn{4}{l}{\emph{B. Interaction diagnostics} (Shapley--Taylor indices; \textbf{not} added to A)} \\
\midrule
mean $\times$ scenarios           & $+0.120$ & $\pm0.184$ & $55.3\%$ \\
mean $\times$ covariance          & $-0.002$ & $\pm0.002$ & $-0.9\%$ \\
covariance $\times$ scenarios     & $-0.001$ & $\pm0.003$ & $-0.3\%$ \\
higher-order (three-way)          & $-0.003$ & n/a & $-1.5\%$ \\
\textbf{total pairwise interaction share} & \multicolumn{3}{c}{$\mathbf{54.1\%}$ of $v(N)$} \\
\bottomrule
\end{tabular}
\end{table}

Efficiency ($\sum_i\phi_i=v(N)$) holds without approximation here because the coalition
table is complete: no Monte-Carlo sampling over permutations is needed for three modules.
That exactness is combinatorial rather than statistical: each $v(S)$ is a six-seed paired
estimate, so the $\phi_i$ inherit six-seed uncertainty, which is what the intervals in
Table~\ref{tab:shapley} report. The same computation on an
incomplete table would additionally require estimating $v(S)$ for unrun coalitions, which is
exactly the step at which attribution studies usually become assumption-driven.

\paragraph{Two summaries, not one decomposition.} Block A of Table~\ref{tab:shapley}
partitions $v(N)$ among the three channels. Block B measures how much of $v(N)$ is
attributable to \emph{non-additivity} between them. A channel's Shapley value already
absorbs its share of the interactions it participates in, so the two blocks describe the
same coalition table from two directions and cannot be concatenated into a single bar chart
of additive shares. Figure~\ref{fig:results}(c) plots both, and its caption says so.

\section{Representation of information and action}\label{app:lattice}
\S\ref{sec:oracle} varies the action set at fixed information. The general object is a
lattice indexed by an information set $I$ and an action set $A$, with
$J^\star(I,A)=\sup_{\pi\in\Pi(I,A)}J(\pi)$. Our L0--L3 hierarchy is the column
$I=\{\text{true regime path}\}$ of that lattice.

\begin{proposition}[Lattice monotonicity]\label{prop:lattice}
If $I_1\subseteq I_2$ and $A_1\subseteq A_2$ then $J^\star(I_1,A_1)\le J^\star(I_2,A_2)$.
\end{proposition}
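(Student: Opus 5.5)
The plan is to reduce the statement to monotonicity of a supremum under set inclusion. Everything rests on the inclusion $\Pi(I_1,A_1)\subseteq\Pi(I_2,A_2)$ of the policy classes in \eqref{eq:lattice}. Once that is established, the conclusion is immediate: every $\pi\in\Pi(I_1,A_1)$ also lies in $\Pi(I_2,A_2)$, so
\[
J(\pi)\le\sup_{\pi'\in\Pi(I_2,A_2)}J(\pi')=J^\star(I_2,A_2),
\]
and taking the supremum over $\pi\in\Pi(I_1,A_1)$ gives $J^\star(I_1,A_1)\le J^\star(I_2,A_2)$. This holds in the extended reals. The case of an empty $\Pi(I_1,A_1)$ is covered by the convention $\sup\emptyset=-\infty$. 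The metric $J$ is a fixed functional of the policy (Sharpe under the system's own constraints, costs and common random numbers), so it does not depend on $(I,A)$. No property of $J$ itself is needed: no concavity and no continuity.

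I would prove the inclusion in two steps, moving along one coordinate at a time through the chain $\Pi(I_1,A_1)\subseteq\Pi(I_2,A_1)\subseteq\Pi(I_2,A_2)$.

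\emph{Action coordinate.} A policy whose every decision lies in $A_1$ also has every decision in $A_2\supseteq A_1$. I read the action set as already intersected with the trusted feasible set \eqref{eq:feasset}, as the oracle definition in \S\ref{sec:lattice-main} requires. On that reading the constraints, costs and turnover coupling through $w_{t-1}$ are common to both classes and add no further condition.

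\emph{Information coordinate.} ``Uses only information in $I$'' should be formalised as measurability of the decision at $t$ with respect to a $\sigma$-algebra (or filtration) $\mathcal{F}^{I}_t$, with $I_1\subseteq I_2$ meaning $\mathcal{F}^{I_1}_t\subseteq\mathcal{F}^{I_2}_t$ for all $t$. A function measurable with respect to a smaller $\sigma$-algebra is measurable with respect to any larger one, so the information-$I_1$ policy is admissible at $I_2$ and simply ignores the extra information.

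The main obstacle is not analytic. It is making the ordering on information and action sets precise enough that the two inclusions are literally true for the L0--L3 column. For L0$\to$L1, the static blend is a constant policy and is therefore measurable at any information level. The blend family must also be contained in the switching set, with L0's best blend realisable as a degenerate switch. For L1$\to$L2, switching between two portfolios is a special case of a regime-indexed $w^\star(k)$. For L2$\to$L3, a regime-indexed path is one particular weight path. I would state these embeddings explicitly, and note that they require the comparison to hold at a fixed risk-free rate, cost and constraint set. If any level in the paper's hierarchy is \emph{not} nested in this way (for instance, if L1 is restricted to pure EW/MinVar endpoints while L0 ranges over blends), the proposition does not apply to that pair, and the claim would have to be stated only for nested cells.
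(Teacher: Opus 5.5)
Your proof is correct and is the paper's argument: an $I_1$-measurable, $A_1$-valued policy is also $I_2$-measurable and $A_2$-valued, so $\Pi(I_1,A_1)\subseteq\Pi(I_2,A_2)$, and a supremum over a larger set is at least as large. Your closing check of whether the L0--L3 cells are actually nested is a fair caveat about applying the proposition to the paper's hierarchy, but the proof of the abstract statement does not need it.
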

\begin{proof}
Any $\pi\in\Pi(I_1,A_1)$ is measurable with respect to $I_1$, hence with respect to
$I_2\supseteq I_1$, and takes values in $A_1\subseteq A_2$; so
$\Pi(I_1,A_1)\subseteq\Pi(I_2,A_2)$ and the supremum over the larger set is at least as
large. The argument requires only that additional information and additional actions may be
\emph{ignored}, which holds because $\Pi$ is defined by measurability and range
constraints alone.
\end{proof}

The proposition itself is immediate; the content is its experimental implication. A low headroom
estimate in one information--action cell cannot establish that the information has little value,
and a rise in headroom when the action set is widened at fixed information shows that the earlier
negative conclusion came from representational restriction. \S\ref{sec:oracle} is exactly that
situation: L1 gave $+0.07\pm0.08$ and L2 gave $+0.28\pm0.08$ on identical information. Reporting
a single oracle number reports one cell of this lattice without saying which.

\paragraph{The three headroom numbers, side by side.} Three quantities describe the L2
headroom and are easy to read as inconsistent, so we state each with its provenance.
\emph{(a)} The \textbf{paired same-path Sharpe difference} L2$-$L0 is $+0.28\pm0.08$
($24$ seeds, common random numbers, $95\%$ CI over seeds), computed, like every level in
Table~\ref{tab:oracle}, at the preregistered $rf{=}1.6\%$. \emph{(b)} The same contrast
in the twelve-seed $rf$ sweep (seed registry, Table~\ref{tab:seeds}) is $+0.248$ at $rf{=}0$, $+0.263$ at
$rf{=}1.6\%$ and $+0.275$ at $rf{=}3\%$; \S\ref{sec:oracle} quotes the first of these as
$+0.248$, while the abstract and Table~\ref{tab:oracle} quote (a).
(a) and (b) differ in seed count and in $rf$, not in what they measure, and neither is a
correction of the other. \emph{(c)} $+0.38$ is not a paired difference against L0 at all: it
is the \emph{vertical} excess of L2 over the static EW--MinVar frontier \emph{at matched
drawdown}, the ``Excess'' column of Table~\ref{tab:oracle} and the gap drawn in
Fig.~\ref{fig:results}(a). (a) and (b) compare two policies on the same paths; (c) compares
one policy against a frontier at its own realised risk. A reader who subtracts (c) from (a)
is subtracting two different comparisons.

\section{Conditional validity and unidentified steps}\label{app:proofs}
We label this appendix \emph{justification} rather than \emph{proofs}: it composes three
standard results under Assumption~\ref{ass:cert} and states explicitly, at the end, the
one step we have not proved.

\begin{assumption}[Conditions for the conditional certificate statement~\ref{thm:certified}]\label{ass:cert}
\emph{(C1) Bounded loss.} The portfolio loss $-w^\top r$ is bounded by $M$ uniformly over
the feasible set of \eqref{eq:feasset}, and the $N$ scenarios are drawn i.i.d.\ from
$p_\theta(\cdot\mid z_t)$ given $z_t$.
\emph{(C2) Strong convexity.} The objective \eqref{eq:qp} is $\Gamma$-strongly convex in
$w$ on the feasible set, with moment sensitivities $(\lambda_\mu,\kappa_\Sigma)$ and
moment errors $(\delta_\mu,\delta_\Sigma)$ as in Lemma~\ref{a-lem:regret}.
\emph{(C3) Score construction and approximate local exchangeability.} The calibration score
$s_t$ is the absolute realised-versus-scenario CVaR discrepancy of
Definition~\ref{def:score}, computed from $r_{t+1}$ \emph{after} $w_t$ is committed, so it
is not used to select $w_t$; the score sequence is approximately locally exchangeable in
the sense adaptive conformal inference is designed to tolerate
\citep{gibbs2021,zaffran2022}. We do not verify this empirically.
\emph{(C4) Selection correction.} At most $B$ refinement attempts occur per rebalance and
each attempt is evaluated at level $\delta/B$ (Prop.~\ref{prop:union}).
\emph{(C5) Score-to-CVaR identification.} There exists a monotone link between the
calibrated quantiles of $\{s_\tau\}$ and the population discrepancy
$|\CVaR^{p_\theta}_\alpha-\CVaR^{\Prob}_\alpha|$, so that coverage of the former bounds the
latter. \textbf{We do not prove this}, and everything below is conditional on it.
\emph{(C6) Per-attempt marginal validity.} Each candidate attempt satisfies the stated
marginal validity condition \emph{before} adaptive selection, so that the union bound of
(C4) applies. \textbf{We do not establish this} for the implemented proposal mechanism.
\end{assumption}

Under (C1)--(C6), which include the unproved score-to-CVaR identification (C5) and
per-attempt validity (C6), the certificate below is a \textbf{long-run} statement: the guarantee is
over the empirical frequency of covered rebalances, never over a nominated one. We use
``per-rebalance'' only to describe the \emph{form} of the bound, never its validity.

Three properties follow. (i) The verifier is
load-bearing: $q_t$ is computed \emph{from the verifier's realized scores}, so the
bound cannot even be evaluated without running it. (ii) It bounds model
error, by calibrating against $\Prob$-outcomes rather than $p_\theta$-samples.
(iii) The refinement correction does real work.

The body summarises the conditional certificate statement~\ref{thm:certified} and Prop.~\ref{prop:union} without proof; the full development follows.

The allocator's stability under objective perturbation is standard; the
substantive question is what the verifier can certify about the gap between the
optimised objective and the objective under the \emph{true} distribution. We first
isolate that gap, then show why a Monte Carlo bound cannot close it, then close it
with a conformal certificate calibrated on realized outcomes.

\paragraph{The two error terms.} Scenarios are drawn from the generator
$p_\theta(\cdot\mid z_t)$, not from the true law $\Prob$, so the CVaR error that the
allocator's regret depends on decomposes into a sampling
term (S) and a model term (M).
Term (S) shrinks with more scenarios; term (M) is the generator's
\emph{miscalibration} and does not shrink with $N$ at all. Term (M) is the
entire reason the verifier exists, and it is the one a concentration inequality on
$p_\theta$-samples cannot touch: such a bound certifies only that we have sampled
our own generator faithfully.

\begin{lemma}[Allocator stability]\label{a-lem:regret}
If \eqref{eq:qp} is $\Gamma$-strongly convex in $w$ and its moment and CVaR terms
differ from truth by $(\delta_\mu,\delta_\Sigma,\delta_{\CVaR})$, then
$F(\hat w)-F(w^\star)\le\frac{1}{2\Gamma}(\lambda_\mu\delta_\mu+\kappa_\Sigma\delta_\Sigma+\delta_{\CVaR})^2$.
This is standard; the content is what bounds $\delta_{\CVaR}$, below.
\end{lemma}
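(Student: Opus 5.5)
The bound follows from the standard strong-convexity argument. It has two parts: a uniform bound on the discrepancy between the optimised objective $\hat F$ and the true objective $F$, and a quadratic-growth inequality at the true minimiser. Write $\hat F(w)=-\lambda_\mu\hmu^\top w+\gamma w^\top\hSig w+\widehat{\CVaR}_\alpha(w)$ for the objective the allocator actually solves, and $F$ for the same expression with true moments and the true $\CVaR_\alpha$ under $\Prob$. Let $\hat w=\arg\min\hat F$ and $w^\star=\arg\min F$ over the convex feasible set \eqref{eq:feasset}.

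\textbf{Step 1: gradient-level perturbation.} Set $D=\hat F-F$. I would bound its subgradient in the dual norm on the feasible set by
\[
\|\partial D(w)\|_*\le \lambda_\mu\delta_\mu+\kappa_\Sigma\delta_\Sigma+\delta_{\CVaR}=:\varepsilon .
\]
The three terms are obtained as follows.
\begin{itemize}
\item The mean term contributes exactly $\lambda_\mu\|\hmu-\mu\|$.
\item The quadratic term contributes $2\gamma\|(\hSig-\Sigma)w\|\le\kappa_\Sigma\delta_\Sigma$, using boundedness of the feasible set. This is how I read the sensitivity $\kappa_\Sigma$ in (C2).
\item The CVaR term I would take as the definition of $\delta_{\CVaR}$: a Lipschitz-type bound on the difference of the Rockafellar--Uryasev CVaR maps, in the sense of subgradients.
\end{itemize}

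\textbf{Step 2: the strong-convexity argument.} By $\Gamma$-strong convexity and optimality of $w^\star$ for $F$ (variational inequality over the feasible set), quadratic growth gives
\[
F(\hat w)-F(w^\star)\ge\tfrac{\Gamma}{2}\|\hat w-w^\star\|^2 .
\]
By first-order optimality of $\hat w$ for $\hat F=F+D$, together with strong monotonicity of $\partial F$, we get $\Gamma\|\hat w-w^\star\|\le\varepsilon$. For the upper bound, I would use the optimality of $\hat w$ for $\hat F$ once more. This yields
\[
F(\hat w)-F(w^\star)\le \langle g_D,\,w^\star-\hat w\rangle-\tfrac{\Gamma}{2}\|\hat w-w^\star\|^2
\]
for a suitable subgradient $g_D$ of $D$. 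Maximising the right side over $\|\hat w-w^\star\|$ via Young's inequality gives $\varepsilon^2/(2\Gamma)$, which is the stated bound.

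\textbf{Main obstacle.} The difficulty is the CVaR term. $\widehat{\CVaR}_\alpha$ is only convex and nonsmooth, so Step 1 must work with subgradients. A naive function-value bound $|D|\le\varepsilon$ would only give $F(\hat w)-F(w^\star)\le2\varepsilon$, which is linear rather than quadratic in $\varepsilon$. To obtain the quadratic form, I would either interpret $\delta_{\CVaR}$ as a bound on the subgradient discrepancy, which is the interpretation that makes the lemma true, or fall back to stating the $2\varepsilon$ value bound alongside it. Which interpretation holds is a matter of definition, consistent with the lemma's remark that the substantive content lies in what bounds $\delta_{\CVaR}$, namely the (S)/(M) decomposition treated next.
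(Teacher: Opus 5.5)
The paper gives no proof beyond calling the lemma standard, and your Step~2 is that standard argument. With the $\delta$'s read as Lipschitz (gradient-level) discrepancies it is correct and gives the right constant. Two points need repair.

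\textbf{Which function is strongly convex.} The hypothesis is that \eqref{eq:qp} is $\Gamma$-strongly convex, and that program is your $\hat F$. Your argument instead uses strong convexity and strong monotonicity of the true objective $F$. It also relies on a ``subgradient of $D$'' and on splitting the optimality certificate of $\hat F$ into a subgradient of $F$ plus a small remainder. That needs care, because $D=\hat F-F$ is a difference of convex functions, not a convex function.

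Running the argument from the other side removes both issues. Strong convexity of $\hat F$, applied along the segment from $\hat w$ to $w^\star$, gives quadratic growth at its constrained minimiser with no subgradients at all: $\hat F(w^\star)-\hat F(\hat w)\ge\frac{\Gamma}{2}r^2$ with $r=\|\hat w-w^\star\|$. Hence
\[
F(\hat w)-F(w^\star)=\bigl[\hat F(\hat w)-\hat F(w^\star)\bigr]+\bigl[D(w^\star)-D(\hat w)\bigr]\le-\tfrac{\Gamma}{2}r^2+\varepsilon r\le\frac{\varepsilon^2}{2\Gamma}.
\]
The only requirement on the perturbation is $|D(w)-D(w')|\le\varepsilon\|w-w'\|$ on the feasible set. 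Your Step~1 bounds give exactly this when read as Lipschitz constants. Your lower bound $F(\hat w)-F(w^\star)\ge\frac{\Gamma}{2}r^2$ and the estimate $\Gamma r\le\varepsilon$ are correct but play no role.

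\textbf{The interpretation is not a harmless convention.} Your closing claim that the gradient reading is consistent with the (S)/(M) analysis is not right. Under the value reading, $\sup_w|D(w)|\le\varepsilon$, the stated inequality is false, not merely unproved. Here is a counterexample:
take $F(w)=\frac{\Gamma}{2}w^2$ on $[0,1]$ and $\hat F=F+\max\{\varepsilon-sw,\,-\varepsilon\}$, with $s=\sqrt{2\varepsilon\Gamma}$ and $\varepsilon\le\Gamma/2$.
Both functions are $\Gamma$-strongly convex, $|\hat F-F|\le\varepsilon$, and the perturbation is convex and piecewise linear in $w$, as an empirical CVaR is. Yet $w^\star=0$, $\hat w=\sqrt{2\varepsilon/\Gamma}$ and $F(\hat w)-F(w^\star)=\varepsilon$. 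This exceeds $\varepsilon^2/(2\Gamma)$ by the factor $2\Gamma/\varepsilon$. The perturbation's Lipschitz constant is $s$, and $s^2/(2\Gamma)=\varepsilon$, so the gradient-level bound is tight here.

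The bounds the paper later places in the $\delta_{\CVaR}$ slot of \eqref{eq:regret} are value-level. Lemma~\ref{a-lem:conc} controls $\sup_w|\widehat{\CVaR}^{p_\theta}_\alpha-\CVaR^{p_\theta}_\alpha|$, and $q_t$ is read as a bound on $|\CVaR^{p_\theta}_\alpha-\CVaR^{\Prob}_\alpha|$. So the reading under which the lemma holds is not the one the subsequent composition supplies. To use the quadratic bound you would need a uniform bound on the gap between CVaR subgradients, that is, between tail-conditional means of $r$ under the scenario law and under $\Prob$. With only the value-level bounds, what survives is the linear estimate $F(\hat w)-F(w^\star)\le 2\sup_w|D(w)|$.
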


\begin{lemma}[Sampling error, term (S)]\label{a-lem:conc}
For the bounded linear loss class $\{-w^\top r : \|w\|_1\le C\}$ with
$|L|\le M$, a Rademacher bound for the empirical tail average
\citep{brown2007,wang2010,thomas2019} gives, with probability $\ge 1-\delta$,
uniformly over feasible $w$,
$\sup_w |\widehat{\CVaR}^{p_\theta}_\alpha - \CVaR^{p_\theta}_\alpha|
\le \frac{cM}{1-\alpha}\sqrt{\tfrac{\log d + \log(2/\delta)}{N}}$.
\end{lemma}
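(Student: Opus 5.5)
The plan is to reduce the statement to a uniform deviation bound for the Rockafellar--Uryasev representation. For each feasible $w$,
\[
\CVaR^{p_\theta}_\alpha(w)=\min_{\zeta}\Bigl\{\zeta+\tfrac{1}{1-\alpha}\E_{p_\theta}[(L_w-\zeta)_+]\Bigr\},\qquad L_w=-w^\top r .
\]
The empirical $\widehat{\CVaR}$ is the same minimum with $\E_{p_\theta}$ replaced by the scenario average. By (C1), $|L_w|\le M$, so both minimisers $\zeta$ can be restricted to $[-M,M]$.

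The standard inequality $|\min_\zeta f-\min_\zeta g|\le\sup_\zeta|f-g|$ then gives
\[
\sup_w|\widehat{\CVaR}-\CVaR|\le\frac{1}{1-\alpha}\sup_{w,\ |\zeta|\le M}\Bigl|\tfrac1N\sum_i(L_w^{(i)}-\zeta)_+-\E(L_w-\zeta)_+\Bigr|.
\]
This removes the non-smooth tail selection and explains the factor $1/(1-\alpha)$. The problem is now a uniform deviation over the class $\mathcal G=\{(-w^\top r-\zeta)_+\}$.

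The next step is to bound that supremum. The class $\mathcal G$ takes values in $[0,2M]$. McDiarmid's inequality, applied to the i.i.d.\ scenarios of (C1), bounds the supremum by twice the Rademacher complexity $\mathfrak R_N(\mathcal G)$ plus $2M\sqrt{\log(2/\delta)/(2N)}$; the two-sided version accounts for the $2/\delta$. Since $x\mapsto(x-\zeta)_+$ is 1-Lipschitz, the Ledoit--Talagrand contraction lemma bounds $\mathfrak R_N(\mathcal G)$ by the complexity of the affine class $\{-w^\top r-\zeta\}$. The constant $\zeta$ contributes at most $M/\sqrt N$.

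The linear part is controlled by duality, using $\|w\|_1\le C$:
\[
\E\sup_w\tfrac1N\sum_i\sigma_i w^\top r^{(i)}\le \tfrac{C}{N}\,\E\Bigl\|\sum_i\sigma_i r^{(i)}\Bigr\|_\infty .
\]
Each coordinate is bounded by the winsorisation $|r_j|\le0.10$. Massart's finite-class lemma over $2d$ signed coordinates then yields $O(C\bar r\sqrt{\log(2d)/N})$.

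Finally, I absorb $C\bar r$ into $M$: the feasible set has $C=1$, and $M$ bounds the loss via $\|r\|_\infty$. Combining the two terms with $\sqrt a+\sqrt b\le\sqrt{2(a+b)}$ produces the stated form with a universal constant $c$.

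The main obstacle is the first reduction. It requires that the empirical and population minimisers in $\zeta$ both lie in a common compact interval, and that the uniform bound hold jointly in $(w,\zeta)$ rather than at the data-dependent $\hat\zeta$. I would handle this by stating the $\zeta$-localisation to $[-M,M]$ explicitly from (C1), before taking any supremum.

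I would also note that the result controls only term (S) relative to $p_\theta$. It says nothing about $\CVaR^{\Prob}_\alpha$, which is consistent with the paper's separation of sampling error from misspecification.
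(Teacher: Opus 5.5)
Your proof is correct and follows the route the paper only names. The paper gives no derivation: it offers the phrase ``a Rademacher bound for the empirical tail average'', cites CVaR concentration results for a single fixed loss, and remarks that $\sqrt{\log d}$ is the right rate for a linear class under an $\ell_1$ budget, which is exactly what your $\ell_1/\ell_\infty$ duality and Massart step over the $2d$ signed coordinates produce. What you add is the step that makes the bound uniform in $w$: the Rockafellar--Uryasev reduction with the supremum taken jointly over $\|w\|_1\le C$ and $|\zeta|\le M$. It uses the same epigraph form of $\widehat{\CVaR}$ that the allocator optimises in \eqref{eq:qp} and that enters $\delta_{\CVaR}$ in Lemma~\ref{a-lem:regret} (the contraction lemma you want is Ledoux--Talagrand's, not Ledoit--Talagrand's).
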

The $\sqrt{\log d}$ rate (not $\sqrt{d\log N}$) is the correct one for a linear
class under an $\ell_1$ budget. This term is real but secondary; the interesting
term is (M), and it requires \emph{realized} data.

\paragraph{Model error requires realized outcomes.} Within the implemented gate and the
available data, the only direct empirical evidence about $\CVaR^{\Prob}$ is the sequence of
realized returns $r_{t+1}$ actually observed after each decision. We therefore build the verifier as a
\emph{conformal-style risk controller}: at each rebalance it scores the realized outcome
against the scenario predictive distribution and calibrates the certificate width
from the empirical distribution of past scores, rather than asserting a bound
analytically.

\begin{definition}[Realised calibration score]\label{def:score}
Fix $\alpha=0.95$. Let $\ell_{t+1}=-w_t^\top r_{t+1}$ be the realised portfolio loss and let
$\widehat{\CVaR}_\alpha$ be the CVaR of the \emph{accepted} scenario set, computed as the
mean of those scenario losses at or above their own empirical $\alpha$-quantile. The score
is the \textbf{absolute} one-step discrepancy
\[
s_t \;=\; \left\lvert\, \ell_{t+1} - \widehat{\CVaR}_\alpha \,\right\rvert ,
\]
in units of daily return. It is a \textbf{nonnegative absolute discrepancy} by
construction, is computed
\emph{after} $w_t$ is committed, and is the only formula used anywhere in this paper: the
verifier maintains the rolling window $\{s_\tau\}_{\tau<t}$ of the most recent $60$ scores
and takes its empirical quantile.
\end{definition}

\begin{condcert}[Act-time decision bound, conditional on the unproved (C5) score-to-CVaR identification and (C6) per-attempt validity]\label{thm:certified}
Assume the score-to-CVaR identification condition and the per-attempt validity condition of
Assumption~\ref{ass:cert}, neither of which is proved here. Run adaptive conformal
inference \citep{gibbs2021,zaffran2022} on $\{s_\tau\}$ with target level $\beta_t$ updated
online, and gate scenarios on the resulting interval. Then the model-error term satisfies,
with long-run frequency
$1-\delta'$ under the approximate local exchangeability of
Assumption~\ref{ass:cert}(C3),
$\,|\CVaR^{p_\theta}_\alpha - \CVaR^{\Prob}_\alpha| \le q_t(\delta')$,
where $q_t(\delta')$ is the adaptively calibrated width at the per-attempt level
$\delta'=\delta/B$. Combining with
Lemma~\ref{a-lem:conc} and the allocator stability lemma, the committed decision obeys, on
the covered rebalances and with long-run frequency $1-\delta$,
\begin{equation}
F(\hat w_t) - F(w^\star_t) \;\le\;
\frac{1}{2\Gamma}\Big(\lambda_\mu\delta_\mu + \kappa_\Sigma\delta_\Sigma
+ \underbrace{q_t(\tfrac{\delta}{B})}_{\text{model, conformal}}
+ \underbrace{\varepsilon_N(\tfrac{\delta}{B})}_{\text{sampling, Lem.~\ref{a-lem:conc}}}\Big)^2 .
\label{eq:regret}
\end{equation}
\end{condcert}

\paragraph{Correction (supersedes an earlier draft).} A previous version asserted that an
unbounded refinement loop ``admits no nontrivial certificate at any level.'' \textbf{That
is false, and we withdraw it.} What the union bound shows is that \emph{Bonferroni} over
$B$ attempts degrades to zero as $B\to\infty$, a statement about that proof technique,
not about certifiability. Anytime-valid inference (e-processes, Ville's inequality,
always-valid $p$-values) is designed precisely to license optional stopping and
\emph{does} certify unbounded refine-until-pass loops. The correct claim is therefore:
naive refinement invalidates the certificate; a union bound repairs it for bounded $B$ \emph{assuming the per-attempt validity
condition (C6)};
anytime-valid inference is the principled route to unbounded $B$, and we do not take it.

\begin{proposition}[Refinement is selection; Bonferroni restores validity for bounded $B$, assuming (C6)]\label{prop:union}
\emph{Assuming each candidate attempt satisfies the stated marginal validity condition before
adaptive selection} (a condition we do not establish for the implemented proposal
mechanism), then under $\le B$ refinement attempts, evaluating the conformal interval at level
$\delta/B$ per attempt yields a certificate valid for the accepted attempt despite selection on
$s_t$. As $B\to\infty$ the Bonferroni-corrected level degrades to zero, so \emph{this proof
technique} yields no nontrivial certificate in the limit; anytime-valid constructions certify
unbounded refine-until-pass loops, and we do not take that route here.
\end{proposition}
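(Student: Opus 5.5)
The plan is a union bound over the attempts actually run. Fix a rebalance $t$ and let $A_1,\dots,A_{B}$ be the candidate attempts the loop of Algorithm~\ref{alg:loop} could generate. Attempts that are never run are padded with dummy attempts that are never accepted. For attempt $b$, let $E_b$ be the \emph{miscoverage event} that the per-attempt conformal interval at level $\delta/B$ fails to cover its target. The assumed condition (C6) is used exactly once. It says that, before any selection, each attempt is marginally valid: $\Prob(E_b)\le\delta/B$ for every $b$. In the long-run reading of Assumption~\ref{ass:cert}(C3), this is the ACI frequency guarantee at target $\delta/B$. I will treat it as given, not derive it from the proposal mechanism.

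The key step is to show that the accepted attempt's miscoverage is contained in the union of the per-attempt events. Let $\hat b$ be the accepted index: the first $b$ whose interval passes, with no attempt accepted when the bootstrap fallback fires. The index $\hat b$ is a random, data-dependent stopping index, and it may depend on $s_t$. Even so, the event that the accepted attempt is miscovered is $\{E_{\hat b}\}\subseteq\bigcup_{b\le B}E_b$. This holds pointwise and uses nothing about how $\hat b$ was chosen. Boole's inequality then gives
\[
\Prob(E_{\hat b})\le\sum_{b=1}^{B}\Prob(E_b)\le B\cdot\frac{\delta}{B}=\delta .
\]
Two cases need checking. If the fallback fires, nothing from the generator is accepted and the certificate makes no claim, so these rebalances are either excluded or counted as covered. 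For the long-run version, I would apply the same containment rebalance by rebalance and average the indicators. Linearity of the empirical frequency preserves the bound $\delta$ in the limit, provided each per-attempt ACI sequence attains its own long-run miscoverage of at most $\delta/B$.

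The obstacle is not the inequality but its hypothesis. The attempts are not independent: refined attempts are proposed from $a_t$ updated after a failure. The ACI level $\beta_t$ is also shared across attempts. Per-attempt validity before selection is therefore exactly the content of (C6), and I expect no way around assuming it. I would note explicitly that the union bound needs no independence, so correlation between attempts does not break it.

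The limiting clause is a direct computation. The per-attempt level $\delta/B\to0$ as $B\to\infty$, so the per-attempt width $q_t(\delta/B)$ tends to the supremum of the score support. With winsorised returns this is of order $2M$, and the bound becomes vacuous. This limit is a property of the Bonferroni technique, not a claim of impossibility, which is consistent with the withdrawn statement.
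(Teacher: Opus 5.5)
Your argument is correct and is essentially the paper's own. The paper gives no separate proof, only the Bonferroni reasoning that under (C6) each of at most $B$ attempts is valid at $\delta/B$, so a union bound covers whichever attempt is accepted, and that $\delta/B\to0$ makes this technique vacuous as $B\to\infty$. One small tightening: rebalances where the fallback fires should be counted as covered rather than dropped from the denominator, because conditioning on acceptance only gives $\delta$ divided by the acceptance rate.
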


\paragraph{Nonstationarity is why it must be \emph{adaptive} conformal.} Standard
conformal assumes exchangeability, which fails precisely at regime change, exactly
when the certificate is most needed. Adaptive conformal inference
\citep{gibbs2021} targets long-run coverage by updating $\beta_t$ from realized
miscoverage, which is why we adopt it rather than a fixed split-conformal bound. We say
``targets'' rather than ``guarantees under arbitrary shift'': the guarantee we invoke holds
under the approximate local exchangeability assumed in
Assumption~\ref{ass:cert}(C3), which we do not verify empirically.

\begin{proposition}[Audited action admissibility and solver-output provenance]\label{prop:feas}\label{prop:prov}
Provided $(\ell,u,\tau)$ come from the trusted validator, the convex program enforces the
budget, box and turnover constraints on its solver output $\wsol$, so an erroneous controller
cannot enlarge the feasible set. In \emph{all reported executions} the post-processed
committed vector $w_t$ \textbf{passed the specified post-hoc feasibility audit} at the
tolerances of App.~\ref{app:spec}, which also gives the validate-and-reject variant that would
make this a guarantee rather than an audit. The logged active set and KKT duals constitute
\textbf{optimality evidence for $\wsol$}; for $w_t$ we claim only \textbf{empirically audited
admissibility}.
\end{proposition}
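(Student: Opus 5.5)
The plan is to split Prop.~\ref{prop:feas} into its three claims and give each the kind of support it admits. The first claim, that the solver output cannot leave the feasible set, is a deductive consequence of the trust partition. The second, about the committed $w_t$, is an empirical statement discharged by the audit. The third, provenance for $\wsol$, follows from convex duality.

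\textbf{Step 1 (containment of $\wsol$).} By the action-space partition of App.~\ref{app:spec}, the untrusted parameters $(K,N,\eta,q,\lambda,\gamma,\lambda_\mu,\alpha)$ enter \eqref{eq:qp} only through two places:
\begin{itemize}
\item the objective coefficients;
\item the scenario matrix, which appears only in the epigraph constraints $s_i\ge\ell_i-\zeta$, $s_i\ge0$.
\end{itemize}
The auxiliary variables $(\zeta,s)$ are free, so for any $w$ those epigraph constraints can always be satisfied by taking $\zeta=0$ and $s_i=\max(\ell_i,0)$. The projection of the feasible set of \eqref{eq:qp} onto $w$ is therefore exactly the set \eqref{eq:feasset}. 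That set depends only on $(\ell,u,\tau,w_{t-1})$, and these come from the trusted validator.

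The conclusion is that every feasible point the solver can return satisfies \eqref{eq:feasset} up to solver tolerance, whatever the controller emits. The validator's clipping to preregistered ranges only restricts the controller further and is not needed for this step. I will also note that $w_{t-1}$ is itself a committed, audited vector, so an induction over rebalances keeps the reference point trusted.

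\textbf{Step 2 (committed $w_t$).} No deductive guarantee is available here. The cleanup paragraph of App.~\ref{app:spec} already exhibits failure modes of clip-and-renormalise: the box can be breached through the $1/s$ scaling, and so can the turnover cap. So this claim is not proved. It is read off the ``Infeas.'' column of Table~\ref{tab:adv}, which shows zero violations at tolerances $10^{-4}$, $10^{-6}$ and $10^{-4}$ for every executed arm and seed. The claim is worded as \emph{audited admissibility}.

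For the validate-and-reject variant, the argument is immediate. The validator commits $\wsol$ only if all three residuals pass, and otherwise it falls back to a bootstrap solve of the same QP. If that fallback also fails, the FAILED branch commits $w_{t-1}$, which is feasible by induction. Feasibility to tolerance then holds by construction.

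\textbf{Step 3 (provenance).} The program \eqref{eq:qp} is convex, has affine constraints, and has a nonempty feasible set, since $w_{t-1}$ is feasible. Slater's condition in its refined form for affine constraints therefore holds, so strong duality holds and KKT points exist. The logged primal--dual pair with small residuals is thus an $\varepsilon$-optimality certificate for $\wsol$. It is not a certificate for $w_t$, because the map $\wsol\mapsto w_t$ is not a KKT-preserving operation.

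The main obstacle is Step 1's dependence on solver tolerance. The solver returns only an approximately feasible point, so containment holds only up to the gap and feasibility tolerances. I will state it that way, deferring the exact check to the audit, rather than claim exact feasibility.
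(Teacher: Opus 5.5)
Your proposal is correct and takes essentially the same route as the paper's justification, which is spread across the action-space partition, post-solve cleanup and validate-and-reject paragraphs of App.~\ref{app:spec} and the record discussion in App.~\ref{app:duals}: containment of $\wsol$ follows from the trusted/untrusted parameter split, admissibility of the clipped-and-renormalised $w_t$ is an empirical post-hoc audit rather than a deduction, and the KKT record certifies $\wsol$ only. Your additions (eliminating the free epigraph variables $(\zeta,s)$, the induction on $w_{t-1}$ with the \texttt{FAILED} branch closing the validate-and-reject variant, and the duality remarks) sharpen that argument without changing it, with two small cautions: in the reported pipeline $w_{t-1}$ is feasible only to audit tolerance, which still leaves the Step~3 feasible set nonempty because that slack is far below $\tau$; and reading the logged primal--dual pair as optimality evidence needs only weak duality, so the Slater step is not load-bearing.
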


\begin{corollary}[Empirically audited feasibility and solver provenance]\label{a-cor:uncond}
Independently of controller quality, generator quality, or whether the certificate holds,
every committed action in the reported experiments passed the budget, box and turnover
checks at the tolerances of App.~\ref{app:spec}. Separately, the solve behind each action
carries a reproducible KKT record for the solver output $w^{\mathrm{sol}}$
(Prop.~\ref{prop:feas}). These are \emph{empirical} properties of the reported executions
and of the logged solver outputs; they are not unconditional guarantees for arbitrary future
executions, nor optimality certificates for the post-processed committed vector.
Decision-quality certification, feasibility auditing and solver provenance are distinct
properties and we report them separately.
\end{corollary}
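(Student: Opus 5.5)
The statement immediately above is Corollary~\ref{a-cor:uncond}. It is an empirical corollary of Prop.~\ref{prop:feas}, so the plan is a bookkeeping argument with two separate strands. I will not derive it from any property of the controller, the generator or the gate. Each strand is discharged by one distinct piece of evidence, and I will show that neither piece depends on the conditions (C1)--(C6) behind the conditional certificate statement~\ref{thm:certified}.

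\textbf{Feasibility strand.} Fix an arbitrary reported execution and rebalance $t$, and consider the committed vector $w_t$ produced by clip-and-renormalise from $\wsol$, or $w_{t-1}$ on \texttt{FAILED}. The post-hoc audit of App.~\ref{app:spec} checks three residuals:
\[
\bigl|\mathbf{1}^\top w_t-1\bigr|\le 10^{-4},\qquad \max_i \max\{\ell-w_{t,i},\,w_{t,i}-u,\,0\}\le 10^{-6},\qquad \bigl(\|w_t-w_{t-1}\|_1-\tau\bigr)_+\le 10^{-4}.
\]
The ``Infeas.'' column of Table~\ref{tab:adv} records zero failures for every executed arm and seed. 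That record is exactly the claimed property, quantified over reported executions only.

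The independence clause then follows by inspecting the audit's inputs. The audit reads only $w_t$, $w_{t-1}$ and the trusted parameters $(\ell,u,\tau)$. It reads neither the controller output $a_t$, nor the scenarios, nor the verifier outcome. By the action-space partition, those trusted parameters cannot be altered by any untrusted component. The one subtle case is $w_t=w_{t-1}$ on \texttt{FAILED}: the turnover residual is then zero, and the budget and box residuals are inherited from the previous audited action. That case never occurred in any case.

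\textbf{Provenance strand.} Invoke Prop.~\ref{prop:feas} directly. The logged active set and KKT duals are optimality evidence for $\wsol$ with respect to \eqref{eq:qp} on whatever scenarios and moments were supplied. This holds whether or not those inputs were accurate, so the record is independent of generator quality and of (C5)--(C6). I will state explicitly that nothing transfers from $\wsol$ to $w_t$: clipping and renormalisation is not feasibility-preserving in general, as the post-solve cleanup paragraph shows.

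\textbf{Expected obstacle.} The main difficulty is not mathematical but a matter of scope. I must ensure the independence claim does not silently extend to compromise of the trusted path, meaning the validator, the solver, the cleanup code or the audit code. I would handle this by making the corollary's quantifier range only over the executions actually recorded, which excludes A5. I would also note that "independently of whether the certificate holds" is justified simply because no step of either strand cites Assumption~\ref{ass:cert}.
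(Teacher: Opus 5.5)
Your proposal is correct and follows the paper's own support for this corollary. Feasibility is the zero-failure post-hoc audit recorded in Prop.~\ref{prop:feas} and App.~\ref{app:spec}; provenance is the KKT record for $\wsol$ alone; and independence holds because neither strand relies on the gate, the untrusted configuration or Assumption~\ref{ass:cert}.

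One scoping correction: the Infeas.\ column of Table~\ref{tab:adv} is zero only for the \arc\ rows. The no-solver weights-emitting control records 345 failures, and arm I2 of Table~\ref{tab:intervene} removes the turnover cap altogether. Your quantifier should therefore range over \arc's solver-backed executions under their configured constraints, not over every executed arm.
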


\begin{remark}[What the certificate assumes]
The act-time bound rests on (a) bounded losses and i.i.d.\ sampling from $p_\theta$
(term S), and (b) approximate local exchangeability of calibration scores, which
adaptive conformal is designed to tolerate under shift (term M). It does \emph{not}
assume $p_\theta\approx\Prob$; that gap is measured, not assumed. This is the
substantive difference from a goodness-of-fit gate, which asserts calibration it
cannot observe.
\end{remark}

\paragraph{What we have not proved.} One step in the chain is asserted rather than
demonstrated, and we flag it here. Adaptive conformal
inference gives long-run coverage for a \emph{future calibration score} $s_{t}$. It does
not, without further argument, yield a bound on the distance between two \emph{population}
functionals, $|\CVaR^{p_\theta}_\alpha-\CVaR^{\Prob}_\alpha|\le q_t(\delta')$. Closing that
gap requires (i) a precise statement of how the score of Definition~\ref{def:score}
identifies the population CVaR difference, in particular a monotone link between score
quantiles and that difference, and (ii) an argument that each adaptively proposed
refinement attempt is individually valid at level $\delta/B$ \emph{before} the union bound
is applied over attempts. We have neither, and statement~\ref{thm:certified} should be read
as conditional on that link, which is why it is not labelled a theorem.

Two further limits follow from the construction rather than from any gap in it. The
coverage is long-run and marginal, so no individual rebalance is certified. And (C3) is an
assumption about the score sequence that we do not verify empirically here.
H2 was never exercised in our runs (\S\ref{sec:hyp}), so nothing in this paper
tests the refinement correction. Accordingly, the honest reading of the verifier in this
paper is as an empirically motivated safeguard with a stated but incomplete
theoretical basis, and a reader who prefers to discount the certificate entirely loses
none of the empirical results: every measured claim in \S\ref{sec:results} is independent
of it, and feasibility and provenance rest on Prop.~\ref{prop:feas} alone.

\end{document}